\newcommand{\vQ}{\mathbf{Q}}
\newcommand{\vK}{\mathbf{K}}
\newcommand{\vV}{\mathbf{V}}
\newcommand{\vS}{\mathbf{S}}
\newcommand{\vP}{\mathbf{P}}
\newcommand{\vO}{\mathbf{O}}
\newcommand{\diag}{\mathrm{diag}}
\newcommand{\flashattn}{FlashAttention\xspace}
\theoremstyle{plain}
\newtheorem{theorem}{Theorem}[section]
\newtheorem{proposition}[theorem]{Proposition}
\theoremstyle{definition}
\newtheorem{definition}[theorem]{Definition}
\theoremstyle{remark}
\icmltitlerunning{Preprint}
\begin{document}

\twocolumn[
\icmltitle{StableMask: Refining Causal Masking in Decoder-only Transformer}



\icmlsetsymbol{equal}{*}
\begin{icmlauthorlist}
\icmlauthor{Qingyu Yin}{zju}
\icmlauthor{Xuzheng He}{pku}
\icmlauthor{Xiang Zhuang}{zju}
\icmlauthor{Yu Zhao}{tencent}
\icmlauthor{Jianhua Yao}{tencent}
\icmlauthor{Xiaoyu Shen}{eit}
\icmlauthor{Qiang Zhang}{zju}

\end{icmlauthorlist}
\icmlaffiliation{zju}{Zhejiang University}
\icmlaffiliation{pku}{Peking University}
\icmlaffiliation{tencent}{Tencent AI Lab}
\icmlaffiliation{eit}{Eastern Institute of Technology, Ningbo}

\icmlcorrespondingauthor{Qiang Zhang}{qiang.zhang.cs@zju.edu.cn}
\icmlcorrespondingauthor{Xiaoyu Shen}{xyshen@eitech.edu.cn}

\icmlkeywords{Machine Learning, ICML}

\vskip 0.3in
]



\printAffiliationsAndNotice{}

\begin{abstract}
The decoder-only Transformer architecture with causal masking and relative position encoding (RPE) has become the \emph{de facto} choice in language modeling. Despite its exceptional performance across various tasks, we have identified two limitations: First, it requires all attention scores to be non-zero and sum up to 1, even if the current embedding has sufficient self-contained information. This compels the model to assign disproportional excessive attention to specific tokens. Second, RPE-based Transformers are not universal approximators due to their limited capacity at encoding absolute positional information, which limits their application in position-critical tasks. In this work, we propose \emph{StableMask}: a parameter-free method to address both limitations by refining the causal mask. It introduces pseudo-attention values to balance attention distributions and encodes absolute positional information via a progressively decreasing mask ratio. StableMask's effectiveness is validated both theoretically and empirically, showing significant enhancements in language models with parameter sizes ranging from 71M to 1.4B across diverse datasets and encoding methods. We further show that it naturally supports (1) efficient extrapolation without special tricks such as StreamingLLM and (2) easy integration with existing attention optimization techniques.
\end{abstract}

\section{Introduction}
Large Language Models (LLMs) have revolutionized natural language processing for their task-agnostic in-context learning paradigm~\cite{brown2020language}. The core of LLMs is the decoder-only Transformer architecture~\cite{vaswani2017attention,radford2019language}, characterized by the self-attention mechanism and relative positional encoding (RPE) to aggregate information and catch the dependency among tokens. It has exhibited superior zero-shot generalization capabilities in comparison to its encoder-decoder counterparts, leading to its increased prevalence in pre-trained LLMs~\cite{lester2021power,patel2023bidirectional}. Despite the impressive success, we identified two important issues within this architecture.

The first issue arises from the softmax function used in self-attention, as its outputs consist solely of non-zero values summing up to 1~\cite{pang2019rethinking}. This forces to allocate a certain distribution of attention probability across all available tokens, even when the current token already has sufficient self-contained information~\cite{xiao2023efficient} or when the attention mechanism does not need to prioritize any token~\cite{hua2022Transformer,bondarenko2023quantizable}. In such cases, the model tends to allocate \textit{disproportional attention} scores to specific tokens like punctuation marks. This problem is exacerbated in decoder-only models as the varied sequence length leads to an extremely uneven attention distribution, particularly on the initial tokens. While approaches have been proposed to mitigate this issue, they all entail significant complexity. e.g., modifying the sparseness of softmax~\cite{laha2018controllable}, or adding dedicated tokens to absorb unnecessary attention~\cite{darcet2023vision}.

The second limitation is associated with various relative positional encoding strategies~\cite{ke2020rethinking}, e.g. ALiBi~\cite{press2022train}, T5~\cite{raffel2020exploring}, and RoPE~\cite{su2021roformer}. Compared with absolute position encoding (APE), RPE has achieved state-of-the-art performance in most natural language task. It also exhibits better extrapolation capabilities, and naturally preserves invariant properties
for several important transformations like rotation and translation, making it more widely used in Transformers~\cite{press2022train}. However,  RPE fails to capture enough absolute positional information as the softmax always generates a {right stochastic matrix}~\cite{luo2022your}, i.e., a square matrix where each row consists of non-negative real numbers adding up to 1. 
This restricts its application in situations where such positional information is crucial. Previous attempts to address this, such as URPE~\cite{luo2022your}, added learnable relative position matrices atop the softmax outputs, which hurt the extrapolation capabilities because of the non-extensibility of learnable parameters.

In this paper, we propose \emph{StableMask} – a tailored approach to address both issues by carefully modifying the causal mask in the decoder-based transformers.
It introduces extra pseudo attention scores to the upper triangular attention matrix, which stabilizes the normalization constant of attention scores within each row regardless of the sequence length and token position. This 
allows the model to allocate excess attention to these dedicated pseudo scores. Moreover, StableMask progressively ensures that the result of softmax is not a right stochastic matrix. With a decreasing mask ratio (i.e. the sum of each row after softmax), it enables the model to encode a measurement of absolute position during the softmax stage, while remaining consistent with the decaying inter-token dependency used in RPE, thus effectively maintaining its extrapolation capability.


StableMask's effectiveness has been thoroughly validated through extensive testing on multiple language models across a diverse array of both synthetic and realistic tasks. 
It represents a substantial advancement in refining the attention mechanisms for decoder-only Transformers, overcoming the inherent limitations while retaining their core strengths. 
A key advantage of StableMask is its parameter-free nature. As StableMask is implemented solely as a direct replacement for the causal mask, it is highly compatibile with the Transformer's native architecture (such as different position encodings, attention optimizations or extrapolation techniques). For instance, we have presented an implementation of StableMask that is optimized for hardware efficiency, aligning with the principles of \flashattn~\cite{dao2022flashattention}. This allows StableMask to seamlessly integrate into the ecosystem of Transformer models, thereby expanding its potential applications.

Our core contributions can be summarized as follows:
\begin{enumerate}
    \item We identified two issues in the commonly used decoder-only Transformer architecture: the disproportional attention distribution and the inability to accurately capture positional information.
    \item We propose {StableMask}, an efficient and easily integrable solution to effectively address both issues by carefully modifying the causal mask.
    \item We validate the effectiveness of StableMask across multiple tasks and encoding methods.
    \item We present a hardware-efficient version of StableMask to optimize its practical applicability.
\end{enumerate}

\begin{figure*}[!ht]
    \centering
    \includegraphics[width=1\textwidth]{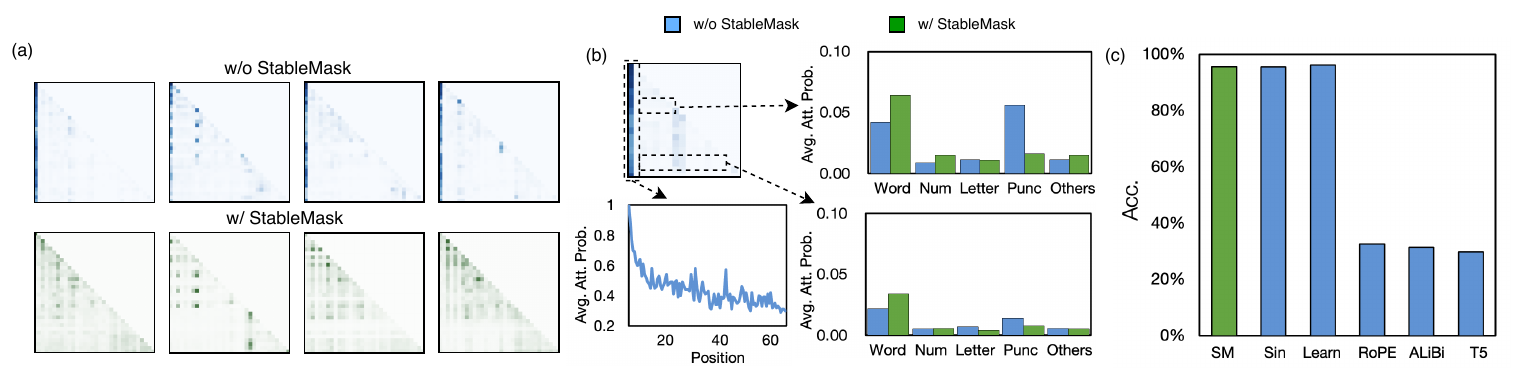}
    \vspace{-0.5cm} 
    \caption{(a) Visual comparison of attention heads with and without StableMask on the OpenLLaMA 1.4B model. (b) The attention allocation to various types of tokens (excluding the initial token) at two different positions and the trend of attention allocation to the initial token over positions, \textit{averaged over heads}. Blue: The original Transformer exhibits a clear disproportional attention issue. Green: StableMask effectively rectifies the proportion of attention allocation. (c) Experimental Results showing RPE’s inability to encode absolute position (Blue). StableMask solves the issue of RPE’s inability to encode absolute position (Green). }
    \label{fig:da_tokens}
\end{figure*}

\section{Preliminary}
\paragraph{Self-Attention}
Let $X$ be the input sequence, $n$ be the sequence length and \(d\) be the dimensionality of the hidden state.
The self-attention mechanism in Transformer architectures calculates attention scores between each pair of words to capture dependencies between words and learn contextual information effectively.
Let \(A\) denote the attention score matrix and \(a_{ij}\) be the attention score between the \(i\)-th word and the \(j\)-th word. We have
    \(A = \frac{QK^\top}{\sqrt{d}}\)where \( Q, K, V \in \mathbb{R}^{n \times d}\) represent the Query, Key, and Value matrices derived from \(X\)~\cite{vaswani2017attention}.  In decoder-only models, \(A\) is further modified by a causal mask \(M\) and a softmax operation:
\begin{equation}
    \tilde{A} = \mathrm{Softmax}(A + M).
    \label{eq:causal_softmax}
\end{equation}
The following holds to prevent the model from attending to future tokens:
\begin{eqnarray}
    M_i &=& [\underbrace{0, \cdots, 0}_{i}, \underbrace{-\infty, \cdots, -\infty}_{n-i}]_n, \\
    \tilde{A_i} &=& [a_{i1}, a_{i2}, \cdots, a_{ii}, 0, \cdots, 0]_n.
\end{eqnarray}

\paragraph{Position Encoding}
The raw Transformer without position encodings is insensitive to permutational rearrangements.
Two chief methods have been employed to remove this insensitivity: absolute position encoding (APE) and relative position encoding (RPE). APE assigns an index-dependent vector at each position to the word embeddings. These assigned vectors are usually trainable parameters to represent absolute positions of each input token~\cite{kenton2019bert,radford2019language}.
More recently, RPE such as ALiBi~\cite{press2022train}, T5~\cite{raffel2020exploring} and RoPE~\cite{su2021roformer} took a different approach by incorporating relative distances of positions into the attention score matrix. RPEs can be mainly classified into additive (T5, ALiBi, etc.) or multiplicative (RoPE, etc.):
\begin{align}
    \text{Add:}&& \tilde{A}_{\mathrm{add}} &= \mathrm{Softmax}\left(\frac{QK^\top + S}{\sqrt{d_k}} +M \right) , \\
    \text{Mul:} && \tilde{A}_{\mathrm{mul}} &= \mathrm{Softmax}\left(\frac{\tilde{Q} \tilde{K}^\top}{\sqrt{d_k}} +M \right) ,
\end{align}
where $\tilde{Q} = Q \odot R_{Q}, \ \tilde{K} = K \odot R_{K}$. $R_Q, R_K $ are rotary forms usually in complex values and \(S\) is a Topelitz matrix. Given its consistent demonstrated improvements over APE, RPE has emerged as the default choice in LLMs.
\section{Problem}
\label{sec:two}
Despite the exceptional performance, we identified two key issues associated with self-attention and RPE.
\paragraph{Disproportional Attention}
\label{sec:da}
The first issue arises from the softmax function used in
self-attention. Given that the softmax function requires {all attention scores to be non-zero and sum up to 1}, it necessitates an inescapable distribution of attention across {on all visible tokens}. However, previous studies~\cite{shen2019improving,hassid2022does,bondarenko2023quantizable,xiao2023efficient} have shown that the attention mechanism often requires very few important tokens, and the others are merely distractions. In this case, the requirement imposed by the softmax function prevents the model from effectively zeroing out the attention scores for irrelevant tokens. Some of these irrelevant tokens, such as initial tokens or non-functional words like punctuation marks, are more frequently observed by other tokens. In consequence, as shown in Figure~\ref{fig:da_tokens}, the model tends to allocate disproportional attention (DA) to them. We refer to these tokens which are not semantically relevant, but receive disproportional attention values, as DA tokens\footnote{Appendix~\ref{appendix:dispro} offers an information-theoretic definition and interpretation of the DA issue.}. The existence of DA tokens can lead to various undesired problems, e.g., perplexity surge in length extrapolation or sensitivity to irrelevant noise~\cite{xiao2023efficient}.



Interestingly, the extent of this DA phenomenon varies across token positions within the decoder-only language model. It is most prominent at the beginning of a sequence, and gradually eases towards the end {(as seen in Figure~\ref{fig:da_tokens}(b))}. Intuitively, as the token position increases, more tokens participate in the softmax operation and even assigning a very small probability to each token can result in a significant accumulative probability. As a result, DA tokens cannot receive as much attention values as they do near the beginning of a sequence. 



Existing solutions, such as StreamingLLM~\cite{xiao2023efficient} and ViT Register~\cite{darcet2023vision}, have attempted to address this by introducing \emph{Artificial Tokens} (AT) to absorb excess attention, so that real tokens can be freed from getting unnecessary DA. We term them as AT-based methods. However, as said, the severity of the DA issue varies along token positions. We hypothesize that adding a fixed number of tokens across all sequences is not position-adaptive and thereby cannot fully address the DA issue.

\paragraph{Inability to Encode Absolute Position}
\label{sec:inability}
Despite its superior performance, RPE that modifies \(QK^\top\) does not ensure \(V\) is also sensitive to position. For instance, when all inputs are identical vectors, the outputs are also guaranteed to be equal because the output of softmax generates a right stochastic matrix~\footnote{For a more in-depth discussion on all-identical inputs and their relation to DA, refer to Appendix~\ref{appendix:edge-case}.}. Therefore, RPE can perform poorly in tasks where positional information is critical. 

To verify this limitation of RPEs, we designed specialized datasets, inspired by URPE~\cite{luo2022your}, which focus on tasks requiring absolute positional information while maintaining consistent input sequences (check Appendix~\ref{sec:experiment_rpe} for details). We report {the average accuracy of various models in Figure~\ref{fig:da_tokens}(c)}. The results demonstrate that models relying exclusively on RPEs exhibit poor performance, confirming the inferiority of RPE in capturing absolute positional information.

\begin{figure*}[!htp]
    \centering
    \includegraphics[width=1\textwidth]{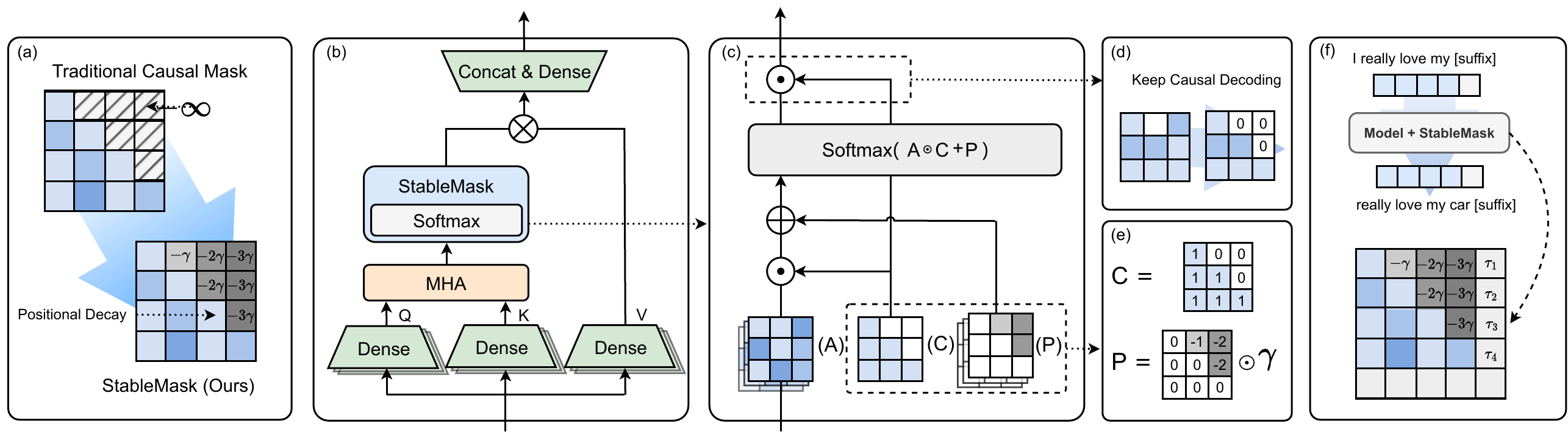}
    \vspace{-0.3cm} 
    \caption{(a) Illustration of the StableMask mechanism. (b) StableMask integrates with the softmax operation, replacing the traditional causal mask. (c) The attention score matrix is first cleared of attention values in the upper triangular part using the $C$ matrix, then pseudo-attention scores are added using the $P$ matrix followed by the softmax computation. (d) After the softmax operation, the remaining attention probabilities in the upper triangular part are cleared using $C$ to ensure the causal decoding property. (e) The $C$ matrix has zeros in the upper triangular part and ones in the lower triangular part, while the $P$ matrix has linear decay in the upper triangular part and zeros in the lower triangular part. $\gamma$ is a hyperparameter. (f) StableMask for inference. An input sequence needs a suffix.} 
    \label{fig:stablemask_arch}
\end{figure*}

One obvious solution to the limitation is to directly replace RPE with APE. However, as mentioned, APE has its own problems such as poor extrapolation, rotation and translation variant, worse prediction accuracy, etc~\cite{su2021roformer,press2022train}.
Another approach is to add additional parameters to the matrix after the softmax to re-encode absolute positional information. For example, URPE~\cite{luo2022your}
adds a learnable Toeplitz matrix $\mathcal{T}$ to the softmax matrix $\tilde{A}$ via:
\begin{eqnarray}
    \mathrm{Attention}(Q, K, V) = (\tilde{A} \odot \mathcal{T})V.
\end{eqnarray}
The URPE approach, while successfully encoding absolute positional information, has several drawbacks. First, it requires additional learnable parameters which complicates themodel optimization. Second, because the $\mathcal{T}$ matrix is fixed, models trained with this method loses its ability to input context that is longer than the training length.

\section{StableMask}
\label{sec:stablemask}
In the previous section, we analyzed two problems with the decoder-only Transformer architecture commonly used in contemporary LLMs: disproportional attention and inability to encode absolute position. Disproportional attention happens when certain attention heads share no need to allocate any attention logits but have to due to the softmax mechanism, and this issue is more pronounced at the beginning of the sequence in the decoder. 
The inability to encode absolute position comes from the result of softmax: it is a right stochastic matrix, with the sum of each row equals one always, so its output is insensitive to absolute positions. 

To address the above two problems, we seek a solution by introducing pseudo-attention scores into the softmax operation. Specifically, the solution should simultaneously meet the following requirements:
\begin{enumerate}[label=(\roman*)]
    \item It can provide \textit{additional pseudo-attention scores} to accommodate excess attention logits, thereby freeing DA tokens from the responsibility of absorbing unnecessary attention values. 
    \item These additional pseudo-attention scores need to adhere to the property of DA in a decoder-only model, i.e. larger at the beginning of the sequence and smaller towards the end of the sequence. 
    \item It ensures that the result of softmax is not a right stochastic matrix, i.e. the sum of each row is not 1, so that positional information can be encoded.
\end{enumerate}
In the following section, we show that all of the above three requirements can be met by carefully modifying the causal mask applied after softmax.

\subsection{Pseudo-attention Score}
\label{sec:training}
\setlength{\arraycolsep}{1pt}
To meet the requirement (i) and (ii), we propose constructing a StableMask attention score matrix $A_{\text{SM}} \in \mathbb{R}^{n \times n}$:

\begin{equation}
\setlength{\arraycolsep}{3pt}
    A_{\text{SM}} = \begin{pmatrix}
 a_{11} & p_{11} & \cdots & p_{1 (n-1)} \\
a_{21} & a_{22}  & \cdots & p_{1 (n-2)} \\
\vdots & \vdots &  \ddots & \vdots \\
 a_{n1} & a_{n2} & \cdots & a_{nn} \\
\end{pmatrix}.
\end{equation}
Here, we call these $p_{ij}$ as \textit{pseudo-attention scores}. When the current attention head does not depend too much on its previous context, it can choose to store unnecessary attention values on these pseudo-attention scores. For each row (all attention scores for the $i$-th token), the sequence length it can attend to is fixed to be $n$. Therefore there will be $n-i$ pseudo-attention scores in each row for excessive attention allocation. This fulfills requirement (ii), which involves having more pseudo-attention values towards the beginning of a sequence. $A_{\text{SM}}$ can be calculated using the following method:
\begin{equation}
    A_{\text{SM}} = A \odot C + P,
\end{equation}
\begin{equation*}
\setlength{\arraycolsep}{3pt}
    C = \begin{pmatrix}
1 & 0  & \cdots & 0 \\
1 & 1 & \cdots & 0 \\
\vdots  & \vdots & \ddots & \vdots \\
1 &  1 & \cdots & 1 \\
\end{pmatrix}, P = \begin{pmatrix}
 0 & p_{11} & \cdots & p_{1 (n-1)} \\
0 & 0  & \cdots & p_{1 (n-2)} \\
\vdots & \vdots &  \ddots & \vdots \\
 0 & 0 & \cdots & 0 \\
\end{pmatrix}.
\end{equation*}

The problem then becomes how should the values of these pseudo-attention scores be set. At the start of training, the distribution of the scaled attention scores has a mean of $0$. These attention scores are also influenced by position encoding, and commonly used RPEs typically exhibit decay with increasing relative distance. Therefore, pseudo-attention scores should not significantly disrupt the original distribution of attention scores, and they should also align with the characteristics of the relative position encoding used by the model. Consequently, for $p_{ij}$, it should conform to:
\begin{equation}
    p_{\text{base}} = 0, ~~ p_{ij} = p_{\text{base}} - (j-1) \gamma,
\end{equation}
where $\gamma$ is a decay rate hyperparameter. Therefore, the attention score matrix with StableMask should be:
\begin{equation}
\setlength{\arraycolsep}{3pt}
    A_{\text{SM}} = \begin{pmatrix}
 a_{11} & -\gamma & \cdots & -(n-1)\gamma \\
a_{21} & a_{22}  & \cdots & -(n-1)\gamma \\
\vdots & \vdots &  \ddots & \vdots \\
 a_{n1} & a_{n2} & \cdots & a_{nn} \\
\end{pmatrix}.
\end{equation}
Finally, we can replace the traditional causal mask operation in Equation \eqref{eq:causal_softmax} with:
\begin{eqnarray}
    \tilde{A} &=& \mathrm{Softmax}(A_{\text{SM}}) \odot C \nonumber\\
    &=& \mathrm{Softmax}(A \odot C + P) \odot C.
    \label{eq:remask}
\end{eqnarray}
Here the $A_{\text{SM}}=A \odot C + P$ inside $\mathrm{Softmax}$ masks the attention score matrix with pseudo-attention scores, whereas the $C$ outside $\mathrm{Softmax}$ replaces the scores which need masking with 0 again. Therefore, StableMask still maintains the characteristics of causal decoding, ensuring that information does not leak from subsequent tokens.

\subsection{StableMask Encodes Absolute Position}
StableMask introduces a set of pseudo-attention scores. Therefore, for those real attention scores (the lower triangular part of the attention matrix $A_{\text{SM}}$), their sum after softmax will not be 1, meeting the requirement (iii). Concretely, let $A_i$ denote the real attention scores of the $i$-th row and $P_i$ denote the pseudo-attention scores in the $i$-th row, we have:
\begin{equation*}
    \sum\mathrm{Softmax}_{A_i\bigcup P_i}(A_i) = 1 - \sum\mathrm{Softmax}_{A_i\bigcup P_i}(P_i),
\end{equation*}
where $\mathrm{Softmax}_{A_i\bigcup P_i}(A_i)$ and \(\mathrm{Softmax}_{A_i\bigcup P_i}(P_i)\) are the real/pseudo attention in each row.
We reconsider the question posed in Section \ref{sec:inability}: whether the model can encode positional information for an identical input sequence $X = [\boldsymbol{x}, \cdots, \boldsymbol{x}]_n$. The answer is affirmative:
notice that $\Sigma_{j\leq i}\exp(A_{ij})$ increases as $i$ increases (all \(A_{ij}\)s are equal), and $\Sigma_{j>i}\exp(P_{ij})$ decreases as $i$ increases, we have
\begin{equation*}
     \sum\mathrm{Softmax}_{A_i\bigcup P_i}(A_i) < \sum\mathrm{Softmax}_{A_{i+1}\bigcup P_{i+1}}(A_{i+1}),
\end{equation*}
which means after Equation \(\eqref{eq:remask}\), the output attention values will be monotonic:
\begin{eqnarray*}
    \tilde{A}(W_V X)^{\top} = [\alpha_1 \boldsymbol{v}, \alpha_2 \boldsymbol{v}, \cdots, \alpha_n \boldsymbol{v}]_n, \\
    0 < \alpha_1 < \alpha_2 < \dots < \alpha_n = 1.
\end{eqnarray*}
This indicates that absolute positional information is effectively captured.

In general, a Transformer decoder with StableMask has the ability to encode absolute positional information:
\begin{theorem}
    Let \(X = [\boldsymbol{x}_1, \cdots, \boldsymbol{x}_n]_n\) be an input sequence of length \(n\) to the StableMask model \(f^{\text{(SM)}}_T\). Then, 
    the first layer of \(f^{\text{(SM)}}_T\) can recover absolute positions \([1,2,\dots,n]\) in the hidden state \(\Omega^{(1)}\). That is, there exist \(W_Q\), \(W_K\), \(W_V\) and \(W_O\) for the first attention layer, along with \(W_1\) and \(W_2\) for the first feed-forward layer, that computes absolute positions and pass them to the next layer.
\end{theorem}
The complete proof can be found in Appendix~\ref{appendix:absolute}.


\subsection{Inference and Length Extrapolation}
\label{sec:inference}
\begin{figure}[!ht]
    \centering
    \includegraphics[width=0.9\linewidth]{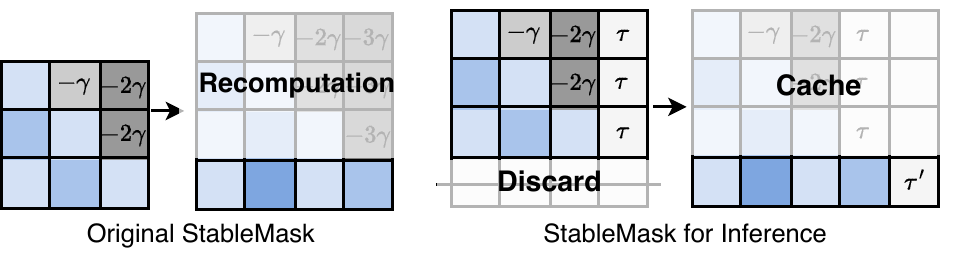}
    \caption{StableMask for Inference. The original StableMask implementation needs to recompute the softmax result for the attention score matrix because additional mask values are added. StableMask for Inference introduces a factor $\tau$ to fix the situation to be in the form of the maximum training length.}
    \label{fig:inference}
\end{figure}

{In Section~\ref{sec:training}}, we introduced the computation process of StableMask. During the training phase, StableMask can be readily applied in parallel within a batch to backpropagate the training loss. 
  During inference, attention computation is usually performed serially and employs KV caching~\cite{tang2021ast,pope2023efficiently}. 

StableMask in its original form is not cost-effective for inference, because it does not support the use of KV caching.
  During the inference stage, when the sequence length is changed e.g. from $n$ to $n+1$ for causal decoding, attention layers need to recalculate the softmax results.
  For the first $n$ rows, an additional pseudo-attention value is added, invalidating the previously calculated attention (see Figure~\ref{fig:inference}).
  This renders KV caching unusable, significantly increasing the cost of inference.
 \begin{table*}[!ht]
    \centering
    \small
    \begin{tabular}{llcc|llccc}
    \toprule
              &\multicolumn{2}{l}{WikiText-103}&& \multicolumn{5}{c}{MiniPile}\\ 
              \midrule
              Model&*PE &\#Params&PPL&Model& *PE & \#Params& PPL 1 Epoch&PPL 2 Epoch\\ 
          \midrule
              BLOOM &ALiBi &71M&$29.9_{\pm .1}$&BLOOM&  ALiBi &160M& $25.8_{\pm .2}$ &$23.3 _{\pm .4}$\\
              BLOOM-SM &ALiBi&71M&$\textbf{29.0}_{\pm .1}$&BLOOM-SM& ALiBi &160M& $\textbf{25.6}_{\pm .0}$&$\textbf{22.9}_{\pm .2}$\\
            \midrule
              OpenLLaMA &RoPE &71M&$27.4_{\pm .2}$&OpenLLaMA&  RoPE&160M& $25.9_{\pm .1}$&$21.2_{\pm .1}$\\
              OpenLLaMA-SM &RoPE &71M&$\textbf{26.9}_{\pm .3}$&OpenLLaMA-SM& RoPE&160M& $\textbf{25.0}_{\pm .0}$&$\textbf{20.9}_{\pm .3}$\\
          \midrule
              BLOOM &ALiBi &160M &$27.6_{\pm .9}$&BLOOM&  ALiBi&430M&  $20.6_{\pm .1}$&$15.6_{\pm .4}$\\
              BLOOM-SM &ALiBi&160M &$\textbf{26.1}_{\pm .2}$&BLOOM-SM&  ALiBi&430M&  $\textbf{19.6}_{\pm .3}$&$\textbf{15.5}_{\pm .2}$\\
            \midrule
              OpenLLaMA &RoPE &160M &$22.5_{\pm .8}$&OpenLLaMA&  RoPE&430M& $19.6_{\pm .2}$&$15.7_{\pm .5}$\\
              OpenLLaMA-SM &RoPE &160M &$\textbf{21.1}_{\pm .6}$&OpenLLaMA-SM& RoPE&430M& $\textbf{19.5}_{\pm .4}$&$\textbf{15.1}_{\pm .5}$\\
         \bottomrule
     \multicolumn{9}{l}{\textit{*: positional encoding type} }\\
    \end{tabular}
    \caption{Pretraining results with (`` -SM'') or without StableMask  on the Wikitext-103 and MiniPile datasets. }
    \label{tab:400M}
\end{table*}
Our solution is simple:
  we pad the sequence to the training length while compressing the padded tokens into a single suffix token.
    Assuming the current sequence length is $n$,
    we first append a suffix token to the end of the sequence (See Figure~\ref{fig:stablemask_arch} (f) and Figure~\ref{fig:inference}).
    At this point, the size of the attention matrix becomes $(n+1) \times (n+1)$. 
    Then, in the additional last column, we add a factor $\tau = \ln(\sum_{i=n}^{N-1}e^{-i\gamma})$:
\begin{equation*}
\label{eq:sm_n+1}
    A'_{\text{SM}} = \begin{pmatrix}
 a_{11} & -\gamma & \cdots & -(n-1)\gamma & \tau \\
a_{21} & a_{22}  & \cdots & -(n-1)\gamma & \tau \\
\vdots & \vdots &  \ddots & \vdots & \vdots \\
 a_{n1} & a_{n2} & \cdots & a_{nn} & \tau\\
 a_{(n+1)1} & a_{(n+1)2} & \cdots & a_{(n+1)n} & a_{(n+1)(n+1)}\\
\end{pmatrix}.
\end{equation*}
  The last row of \(A'_{\text{SM}}\) comes from the suffix and will not be utilized for generation.
  This makes each row equivalent to the case when the sequence length is the same as the training length, allowing us to use KV caching.

Next, we deal with the length extrapolation scenario, i.e. inputs that are longer than the pretraining length limit.
Notice that when $n$ reaches the maximum training length \(N\), $\tau$ becomes $0$. 
This setup prevents the model from continuing to generate $\tau$ values beyond the training length.
Therefore, during extrapolation, we set $\tau=-n\gamma$, where $n\geq N$ is the current sequence length. \(\tau\) in long sequences is a very small number after applying the softmax, and its value will approach zero as \(n\) grows. 
However, the presence of this term still ensures that the softmax result is not a right stochastic matrix, thereby asymptotically encoding absolute positional information.
In addition, when the sequence length is very long, the phenomenon of disproportional attention nearly disappears, as we concluded in Section~\ref{sec:da}. Hence the pseudo-attention score does not need to maintain a large value.

\subsection{Hardware-Efficient Implementation of StableMask}

\flashattn~\cite{dao2022flashattention} represents a major advance in accelerating the Transformer architecture. It avoids repeated data transfers between GPU's High Bandwidth Memory (HBM) and processing units, by segmenting and sequentially processing the \(QKV\) matrix on-chip.
StableMask's integration into this framework is seamless, requiring only minimal modifications. 
In the \flashattn paradigm, the query \( Q \in \mathbb{R}^{n \times d_H} \), key \( K \in \mathbb{R}^{n \times d_H} \), and value \( V \in \mathbb{R}^{n \times d_H} \) matrices are partitioned into \( Tr = \frac{n}{Br} \) blocks \( Q_1, \ldots, Q_{Tr} \), \( K_1, \ldots, K_{Tr} \), \( V_1, \ldots, V_{Tr} \), each of dimension \( \mathbb{R}^{Br \times d_H} \). Then each block \( Q_i, K_j, V_i \) is fetched for computation. The attention scores \( S_i^{(j)} \) for blocks \( Q_i \) and \( K_j \) are derived from the on-chip computation:
    $S_i^{(j)} = Q_i K_j^T \in \mathbb{R}^{Br \times Br}$.
With the incorporation of StableMask into \flashattn, two additional fused operations are introduced as follows:
\begin{equation}
    S_i^{(j)} = (Q_i K_j^T) \odot C_i^{(j)} + P_i^{(j)},
\end{equation}
where \( P \) and \( C \) correspond to the StableMask matrices, segmented into \( Tr \times Tr \) blocks with \( P_i^{(j)}, C_i^{(j)} \in \mathbb{R}^{Br \times Br} \) and loaded on-chip.
We include a complete formula derivation and pseudocode implementation in Appendix \ref{appendix:flash_att}.

\begin{table*}[ht]
    \centering
    \small
    \begin{tabular}{l|ccccc|cccccc} 
        \toprule
         \multirow{2}{*}{Model}&&\multicolumn{3}{c}{PPL / Tokens}& & \multicolumn{6}{c}{DownStream Tasks}\\
         &5B&10B & 15B& 20B&25B & LBD& PIQA& ARCE&ARCC&OBQA &WG\\ 
          \midrule
          OpenLLaMA&15.4$_{\pm .2}$& 14.8$_{\pm .3}$& 12.4$_{\pm .3}$& 11.7$_{\pm .2}$ &10.7$_{\pm .3}$& 59.4& 67.1&  51.4&25.6& 31.4&53.5\\
         OpenLLaMa-SM& \textbf{15.0}$_{\pm .2}$& \textbf{14.6}$_{\pm .1}$& \textbf{11.9}$_{\pm .1}$& \textbf{11.3}$_{\pm.4}$&\textbf{10.4}$_{\pm.3}$& \textbf{59.6}& 67.1& \textbf{51.7}&25.6& \textbf{32.6}& \textbf{54.1}\\
         \bottomrule
    \end{tabular}
    \caption{Left: Pretraining result of OpenLLaMA 1.4B with RoPE. Right: Result of downstream tasks on OpenLLaMA 1.4B.}
    \label{tab:1B}
\end{table*}

\begin{figure*}[ht]
    \centering
    \includegraphics[width=1\textwidth]{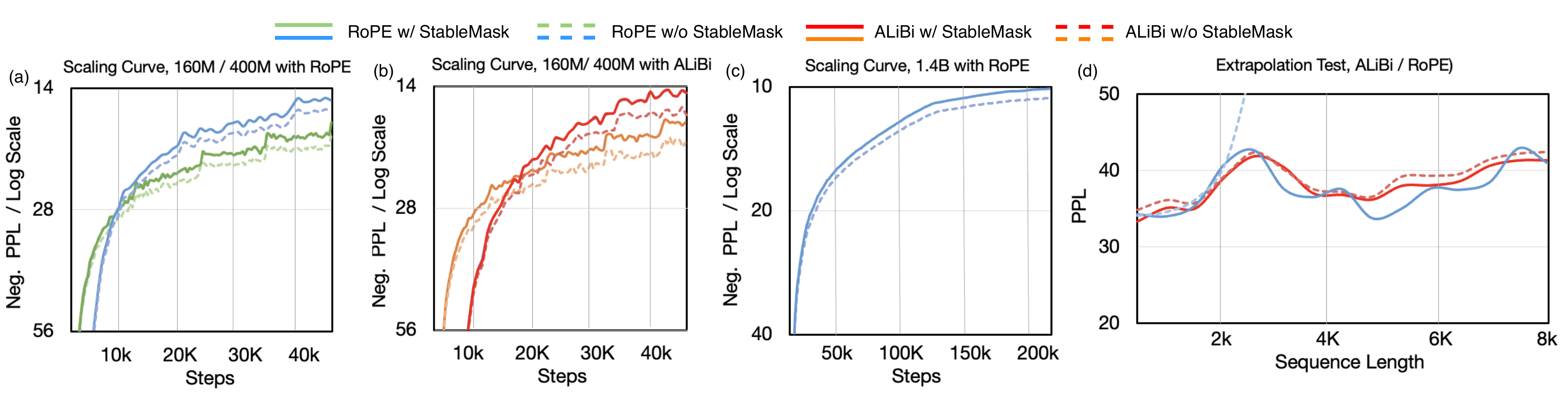}
    \vspace{-0.5cm} 
    \caption{ (abc): Scaling Curve of models from 160M to 1.4B across different positional encodings. (d): extrapolation results (with window attention). StableMask consistently improves the model performance while enabling effective extrapolation.}
    
    \label{fig:scaling_curve}
\end{figure*}

\section{Experiments}
In this section, we present extensive experiments to rigorously evaluate the performance of our proposed method. 

\subsection{StableMask Solves Two Problems}
Our initial assessment confirms the efficacy of the StableMask model in addressing the two problems in Transformer models. The experimental results have been presented in Figure~\ref{fig:da_tokens}. Firstly, concerning the disproportionate attention problem, we perform a comparative visualization of the attention heads in models with and without StableMask. By calculating the attention probability ratios for the first token and various token types, we observed that StableMask largely rectifies the issue of abnormal attention distribution. With StableMark, both initial tokens and punctuation marks experience a significant reduction in attention values. Regarding the second issue of encoding absolute positional information, we evaluated the model's fitting capabilities on a specially designed dataset, comparing StableMask with various Position Encoding approaches. The findings indicate StableMask adeptly encodes absolute positional information, thereby effectively remedying the limitations inherent in relative position encoding. We also provided a visualization of the new attention score matrix after softmax with StableMask in Appendix~\ref{appendix:visualization}.

\subsection{StableMask Improves Model Performance}
We further tested the performance of StableMask on various model architectures and position encodings. Our experiments leverage models built on BLOOM (LLaMA architecture with ALiBi) and OpenLLaMA~\cite{touvron2023llama} (RoPE~\cite{su2021roformer}) architectures. Detail settings could be checked in the Appendix~\ref{appendix:training_details}. 

\textbf{Performance on Wikitext-103 and MiniPile (Table~\ref{tab:400M})}: Empirical evidence underscores the efficacy of models employing StableMask when trained on both Wikitext-103~\cite{merity2016pointer} and MiniPile~\cite{kaddour2023minipile}. These models demonstrate enhanced perplexity (PPL) scores, a pattern consistent across different architectures and sizes, including those with ALiBi and RoPE, and spanning parameter scales of 71M to 400M. Notably, within those datasets, models integrating StableMask consistently outshine their counterparts lacking this feature.

\textbf{Impact on Scaling Performance (Table~\ref{tab:1B})}: The Pile is an extensive open-source dataset tailored for large-scale language modeling. We pretrained a 1.4B model with LLaMA architecture on the Pile dataset with 25B tokens. In the context of scaling of tokens, the model with StableMask consistently achieves better PPL scores compared to the standard OpenLLaMA model, showing the scaling ability of models with StableMask.

\textbf{Effectiveness in Downstream Tasks (Table~\ref{tab:1B})}: When examining pre-trained models on downstream tasks like LAMBADA~\cite{paperno2016lambada}, PIQA~\cite{bisk2019piqa}, ARC-Easy~\cite{Yadav_2019}, ARC-Challenge~\cite{Yadav_2019}, OpenbookQA~\cite{Mihaylov2018CanAS}, and Winogrande~\cite{sakaguchi2021winogrande}, model with StableMask shows a general trend of improved performance. It suggests that StableMask not only improves language understanding in the pretraining stage but also enhances effectiveness in downstream tasks.
\subsection{Extrapolation Capability}
As StableMask resolves the problem of DA tokens, it naturally addresses the attention sink issue~\cite{xiao2023efficient}, where initial tokens get large attention values and removing them from the attention window leads to a surge in perplexity. The models with our proposed StableMask do not need to preserve tokens at the beginning of the sequence during window-based extrapolation and avoid causing generation failures.
As shown in Figure~\ref{fig:scaling_curve}, when using the RoPE position encoding, the extrapolation perplexity quickly explodes without StableMask. When StableMask is applied, the extrapolation perplexity remains stable with window
 attention, where only the most recent KVs are cached.
Furthermore, we believe that the parameter-free nature of StableMask facilitates its seamless integration with other extrapolation methods, a prospect we leave for future exploration.

%

\begin{table}[ht]
    \centering
    \small
    \begin{tabular}{lc|lc}
        \toprule
              Methods& PPL  & Pseudo Value&PPL\\
         \midrule
              Baseline & 22.5 & $-\infty$&22.5\\
              Learnable AT& 21.6 & 0&21.5\\
               Fixed Value AT& 22.4 & $1 \times 10^{-2}$ &22.2\\
              StableMask& \textbf{21.1} & Positional Decay&\textbf{21.1}\\
         \bottomrule
    \end{tabular}
    \caption{Left: Experiment result of ablation study and comparison of AT method on OpenLLaMA, 160M. Right: Ablation experiment, 160M on OpenLLaMA.}
    \label{tab:ablation}
    \vspace{-1em}
\end{table}
\subsection{StableMask vs AT-based Methods}
In Section \ref{sec:two}, we discussed that the artificial token (AT)-based methods are one alternative method to mitigate the DA problem. These artificial tokens could be either learnable, i.e. added before the embedding layer, or fixed as constant vectors, e.g. zero vector. However, we find that as AT-based methods provide the same number of tokens for all sequences, its benefit is not as significant as StableMask (see Table \ref{tab:ablation}) since the severity of the DA issue varies along the sequence. For fair comparison, we retrained OpenLLaMA models using the AT method and StableMask on the MiniPile dataset. 

\subsection{Impact on Inference Efficiency}

In Section~\ref{sec:inference}, we introduced StableMask for Inference, which changes the form of the mask to allow for more efficient inference strategies like KV cache. To validate its effectiveness, we tested the inference efficiency of a standard Transformer (Baseline), a model using StableMask (SM), and a model using StableMask for Inference (SM-I). We present the results in Figure~\ref{fig:lantency} and find that StableMask for Inference significantly improved the model's inference efficiency, making it comparable to the efficiency of traditional Transformers.

\begin{figure}[ht]
    \centering
    \includegraphics[width=0.85\linewidth]{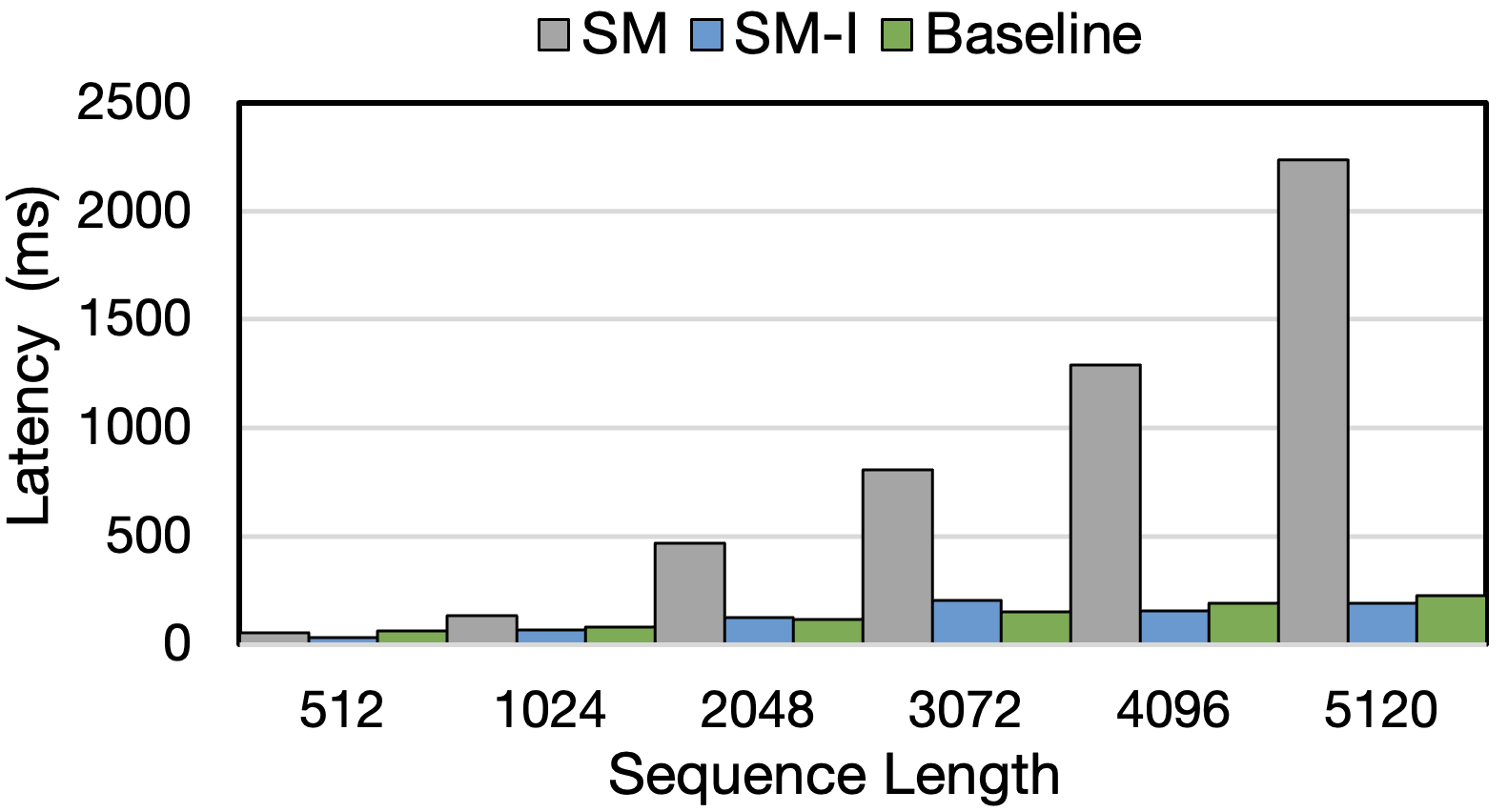}
    \caption{Inference latency test on OpenLLaMA 1.4B. Our proposed StableMask adapted for fast inference (SM-I) significantly reduces the running latency.}
    \label{fig:lantency}
\end{figure}

\subsection{Effects of Pseudo Attention Value}
In Section \ref{sec:stablemask}, we introduced positional linear decay, making the pseudo-attention scores align with the characteristics of real attention scores.w
To validate its rationality, we conducted ablation experiments on various types of pseudo-attention scores. These experiments included four modes: (a) No addition of pseudo-attention scores, i.e., maintaining a mask of negative infinity.
(b) Padding with zeros, which aligns with the values of attention score distribution.
(c) Padding with a value different from the attention score distribution, e.g. $1\times 10^{-2}$.
(d) The positional decay method we proposed. 

Our ablation studies, as detailed in Table \ref{tab:ablation}, demonstrate that a decay value like \(1 \times 10^{-2}\) deviates significantly from the original attention matrix's distribution, leading to diminished pretraining performance. The implementation of positional decay, however, excels in the training phase, showcasing state-of-the-art performance.

\section{Related Work}
\label{sec:related}
Several studies have attempted to address issues inherent in the attention mechanism and softmax operation. A pivotal contribution by \cite{hassid2022does} raised questions about the role of certain heads in the attention mechanism. They discovered that substituting a subset of heads with constant diagonal matrices could even enhance model performance, suggesting that part of the model's attention heads do not need to attend to any tokens other than themselves. 
Quantizable Transformer~\cite{bondarenko2023quantizable} and StreamingLLM~\cite{xiao2023efficient} identified a tendency in some attention heads to accumulate probabilities on the initial few tokens or on tokens similar to punctuation marks. \citet{bondarenko2023quantizable} demonstrated that this behavior impacts model quantization, proposing a solution by trimming softmax and employing gated attention. StreamingLLM, on the other hand, observed that this phenomenon affects windowed attention, and addressed it by preserving the initial tokens.
~\citet{darcet2023vision} proposed adding ``register tokens'' which are essentially artificial places for the real tokens to attend to. The added tokens serve as a way to absorb the excessive attention that would otherwise accumulate on the initial tokens.

However, the previous approach of adding or using extra tokens 
either (1) uses fixed values or weights which does not account for possible distributional shifts when extrapolating to longer sequences; (2) does not explore its potential interference with positional embeddings; (3) adds extra parameters or computation to the attention layer, while not making clear whether existing optimization techniques are still applicable; (4) does not provide a theoretical framework for understanding the phenomenon more deeply.

\section{Conclusion}
StableMask represents a significant advancement in the field of language modeling, by simultaneously addressing two limitations of the decoder-only Transformer architecture: disproportional attention and inability to encode absolute position. By refining the causal mask with pseudo-attention values, StableMask adeptly balances attention distributions and encodes absolute positional information through a progressively decreasing mask ratio. It preserves the inherent distribution of the attention score matrix and enhances the model's ability in various natural language tasks. 

While StableMask demonstrates much potential, it is not without its constraints. One notable limitation is the slightly increased computational demand compared to conventional attention mechanisms. However, as the increased computation is only one matrix multiplication, we believe this overhead is negligible. Furthermore, StableMask inherently encodes absolute positional information, necessitating careful calibration to prevent the model from being adversely affected. We anticipate that forthcoming research will further refine our approach and overcome these challenges.

\section{Acknowledgement}

We thank Songlin Yang and other collaborators for the suggestions on language expression and image design in this paper.



\newpage
\bibliography{citation}

\begin{thebibliography}{35}
\providecommand{\natexlab}[1]{#1}
\providecommand{\url}[1]{\texttt{#1}}
\expandafter\ifx\csname urlstyle\endcsname\relax
  \providecommand{\doi}[1]{doi: #1}\else
  \providecommand{\doi}{doi: \begingroup \urlstyle{rm}\Url}\fi

\bibitem[Bisk et~al.(2019)Bisk, Zellers, Bras, Gao, and Choi]{bisk2019piqa}
Bisk, Y., Zellers, R., Bras, R.~L., Gao, J., and Choi, Y.
\newblock Piqa: Reasoning about physical commonsense in natural language, 2019.

\bibitem[Bondarenko et~al.(2023)Bondarenko, Nagel, and Blankevoort]{bondarenko2023quantizable}
Bondarenko, Y., Nagel, M., and Blankevoort, T.
\newblock Quantizable transformers: Removing outliers by helping attention heads do nothing.
\newblock \emph{arXiv preprint arXiv:2306.12929}, 2023.

\bibitem[Brown et~al.(2020)Brown, Mann, Ryder, Subbiah, Kaplan, Dhariwal, Neelakantan, Shyam, Sastry, Askell, et~al.]{brown2020language}
Brown, T., Mann, B., Ryder, N., Subbiah, M., Kaplan, J.~D., Dhariwal, P., Neelakantan, A., Shyam, P., Sastry, G., Askell, A., et~al.
\newblock Language models are few-shot learners.
\newblock \emph{Advances in Neural Information Processing Systems}, 33:\penalty0 1877--1901, 2020.

\bibitem[Dao et~al.(2022)Dao, Fu, Ermon, Rudra, and R{\'e}]{dao2022flashattention}
Dao, T., Fu, D., Ermon, S., Rudra, A., and R{\'e}, C.
\newblock Flashattention: Fast and memory-efficient exact attention with io-awareness.
\newblock \emph{Advances in Neural Information Processing Systems}, 35:\penalty0 16344--16359, 2022.

\bibitem[Darcet et~al.(2023)Darcet, Oquab, Mairal, and Bojanowski]{darcet2023vision}
Darcet, T., Oquab, M., Mairal, J., and Bojanowski, P.
\newblock Vision transformers need registers.
\newblock \emph{arXiv preprint arXiv:2309.16588}, 2023.

\bibitem[Hassid et~al.(2022)Hassid, Peng, Rotem, Kasai, Montero, Smith, and Schwartz]{hassid2022does}
Hassid, M., Peng, H., Rotem, D., Kasai, J., Montero, I., Smith, N.~A., and Schwartz, R.
\newblock How much does attention actually attend? questioning the importance of attention in pretrained transformers, 2022.

\bibitem[Hua et~al.(2022)Hua, Dai, Liu, and Le]{hua2022Transformer}
Hua, W., Dai, Z., Liu, H., and Le, Q.
\newblock Transformer quality in linear time.
\newblock In \emph{International Conference on Machine Learning}, pp.\  9099--9117. PMLR, 2022.

\bibitem[Kaddour(2023)]{kaddour2023minipile}
Kaddour, J.
\newblock The minipile challenge for data-efficient language models, 2023.

\bibitem[Kazemnejad et~al.(2023)Kazemnejad, Padhi, Ramamurthy, Das, and Reddy]{kazemnejad2023impact}
Kazemnejad, A., Padhi, I., Ramamurthy, K.~N., Das, P., and Reddy, S.
\newblock The impact of positional encoding on length generalization in transformers, 2023.

\bibitem[Ke et~al.(2020)Ke, He, and Liu]{ke2020rethinking}
Ke, G., He, D., and Liu, T.-Y.
\newblock Rethinking positional encoding in language pre-training.
\newblock \emph{arXiv preprint arXiv:2006.15595}, 2020.

\bibitem[Kenton \& Toutanova(2019)Kenton and Toutanova]{kenton2019bert}
Kenton, J. D. M.-W.~C. and Toutanova, L.~K.
\newblock Bert: Pre-training of deep bidirectional transformers for language understanding.
\newblock In \emph{Proceedings of NAACL-HLT}, pp.\  4171--4186, 2019.

\bibitem[Kim et~al.(2023)Kim, Kim, and Mozafari]{kim2023provable}
Kim, J., Kim, M., and Mozafari, B.
\newblock Provable memorization capacity of transformers.
\newblock In \emph{International Conference on Learning Representations}, 2023.

\bibitem[Laha et~al.(2018)Laha, Chemmengath, Agrawal, Khapra, Sankaranarayanan, and Ramaswamy]{laha2018controllable}
Laha, A., Chemmengath, S.~A., Agrawal, P., Khapra, M., Sankaranarayanan, K., and Ramaswamy, H.~G.
\newblock On controllable sparse alternatives to softmax.
\newblock \emph{Advances in Neural Information Processing Systems}, 31, 2018.

\bibitem[Lester et~al.(2021)Lester, Al-Rfou, and Constant]{lester2021power}
Lester, B., Al-Rfou, R., and Constant, N.
\newblock The power of scale for parameter-efficient prompt tuning.
\newblock In \emph{Proceedings of the 2021 Conference on Empirical Methods in Natural Language Processing}, pp.\  3045--3059, 2021.

\bibitem[Luo et~al.(2022)Luo, Li, Zheng, Liu, Wang, and He]{luo2022your}
Luo, S., Li, S., Zheng, S., Liu, T.-Y., Wang, L., and He, D.
\newblock Your transformer may not be as powerful as you expect.
\newblock \emph{Advances in Neural Information Processing Systems}, 35:\penalty0 4301--4315, 2022.

\bibitem[Merity et~al.(2016)Merity, Xiong, Bradbury, and Socher]{merity2016pointer}
Merity, S., Xiong, C., Bradbury, J., and Socher, R.
\newblock Pointer sentinel mixture models.
\newblock \emph{arXiv preprint arXiv:1609.07843}, 2016.

\bibitem[Mihaylov et~al.(2018)Mihaylov, Clark, Khot, and Sabharwal]{Mihaylov2018CanAS}
Mihaylov, T., Clark, P., Khot, T., and Sabharwal, A.
\newblock Can a suit of armor conduct electricity? a new dataset for open book question answering.
\newblock In \emph{Conference on Empirical Methods in Natural Language Processing}, 2018.
\newblock URL \url{https://api.semanticscholar.org/CorpusID:52183757}.

\bibitem[Pang et~al.(2019)Pang, Xu, Dong, Du, Chen, and Zhu]{pang2019rethinking}
Pang, T., Xu, K., Dong, Y., Du, C., Chen, N., and Zhu, J.
\newblock Rethinking softmax cross-entropy loss for adversarial robustness.
\newblock In \emph{International Conference on Learning Representations}, 2019.

\bibitem[Paperno et~al.(2016)Paperno, Kruszewski, Lazaridou, Pham, Bernardi, Pezzelle, Baroni, Boleda, and Fern{\'a}ndez]{paperno2016lambada}
Paperno, D., Kruszewski, G., Lazaridou, A., Pham, Q.~N., Bernardi, R., Pezzelle, S., Baroni, M., Boleda, G., and Fern{\'a}ndez, R.
\newblock The lambada dataset: Word prediction requiring a broad discourse context.
\newblock \emph{arXiv preprint arXiv:1606.06031}, 2016.

\bibitem[Park et~al.(2021)Park, Yun, Lee, and Shin]{park2021minimum}
Park, S., Yun, C., Lee, J., and Shin, J.
\newblock Minimum width for universal approximation.
\newblock In \emph{International Conference on Learning Representations}, 2021.

\bibitem[Patel et~al.(2023)Patel, Li, Rasooli, Constant, Raffel, and Callison-Burch]{patel2023bidirectional}
Patel, A., Li, B., Rasooli, M.~S., Constant, N., Raffel, C., and Callison-Burch, C.
\newblock Bidirectional language models are also few-shot learners.
\newblock In \emph{The Eleventh International Conference on Learning Representations}, 2023.

\bibitem[Polyanskiy \& Wu(2016)Polyanskiy and Wu]{polyanskiy2016strong}
Polyanskiy, Y. and Wu, Y.
\newblock Strong data-processing inequalities for channels and bayesian networks, 2016.

\bibitem[Pope et~al.(2023)Pope, Douglas, Chowdhery, Devlin, Bradbury, Heek, Xiao, Agrawal, and Dean]{pope2023efficiently}
Pope, R., Douglas, S., Chowdhery, A., Devlin, J., Bradbury, J., Heek, J., Xiao, K., Agrawal, S., and Dean, J.
\newblock Efficiently scaling transformer inference.
\newblock \emph{Proceedings of Machine Learning and Systems}, 5, 2023.

\bibitem[Press et~al.(2022)Press, Smith, and Lewis]{press2022train}
Press, O., Smith, N., and Lewis, M.
\newblock Train short, test long: Attention with linear biases enables input length extrapolation.
\newblock In \emph{International Conference on Learning Representations}, 2022.

\bibitem[Radford et~al.(2019)Radford, Wu, Child, Luan, Amodei, Sutskever, et~al.]{radford2019language}
Radford, A., Wu, J., Child, R., Luan, D., Amodei, D., Sutskever, I., et~al.
\newblock Language models are unsupervised multitask learners.
\newblock \emph{OpenAI blog}, 1\penalty0 (8):\penalty0 9, 2019.

\bibitem[Raffel et~al.(2020)Raffel, Shazeer, Roberts, Lee, Narang, Matena, Zhou, Li, and Liu]{raffel2020exploring}
Raffel, C., Shazeer, N., Roberts, A., Lee, K., Narang, S., Matena, M., Zhou, Y., Li, W., and Liu, P.~J.
\newblock Exploring the limits of transfer learning with a unified text-to-text transformer.
\newblock \emph{The Journal of Machine Learning Research}, 21\penalty0 (1):\penalty0 5485--5551, 2020.

\bibitem[Sakaguchi et~al.(2021)Sakaguchi, Bras, Bhagavatula, and Choi]{sakaguchi2021winogrande}
Sakaguchi, K., Bras, R.~L., Bhagavatula, C., and Choi, Y.
\newblock Winogrande: An adversarial winograd schema challenge at scale.
\newblock \emph{Communications of the ACM}, 64\penalty0 (9):\penalty0 99--106, 2021.

\bibitem[Shen et~al.(2019)Shen, Zhao, Su, and Klakow]{shen2019improving}
Shen, X., Zhao, Y., Su, H., and Klakow, D.
\newblock Improving latent alignment in text summarization by generalizing the pointer generator.
\newblock In \emph{Proceedings of the 2019 conference on empirical methods in natural language processing and the 9th international joint conference on natural language processing (EMNLP-IJCNLP)}, pp.\  3762--3773, 2019.

\bibitem[Su et~al.(2021)Su, Lu, Pan, Murtadha, Wen, and Liu]{su2021roformer}
Su, J., Lu, Y., Pan, S., Murtadha, A., Wen, B., and Liu, Y.
\newblock Roformer: Enhanced transformer with rotary position embedding.
\newblock \emph{arXiv preprint arXiv:2104.09864}, 2021.

\bibitem[Tang et~al.(2021)Tang, Li, Ge, Shen, Zhu, and Luo]{tang2021ast}
Tang, Z., Li, C., Ge, J., Shen, X., Zhu, Z., and Luo, B.
\newblock Ast-transformer: Encoding abstract syntax trees efficiently for code summarization.
\newblock In \emph{2021 36th IEEE/ACM International Conference on Automated Software Engineering (ASE)}, pp.\  1193--1195. IEEE, 2021.

\bibitem[Touvron et~al.(2023)Touvron, Martin, Stone, Albert, Almahairi, Babaei, Bashlykov, Batra, Bhargava, Bhosale, et~al.]{touvron2023llama}
Touvron, H., Martin, L., Stone, K., Albert, P., Almahairi, A., Babaei, Y., Bashlykov, N., Batra, S., Bhargava, P., Bhosale, S., et~al.
\newblock Llama 2: Open foundation and fine-tuned chat models.
\newblock \emph{arXiv preprint arXiv:2307.09288}, 2023.

\bibitem[Vaswani et~al.(2017)Vaswani, Shazeer, Parmar, Uszkoreit, Jones, Gomez, Kaiser, and Polosukhin]{vaswani2017attention}
Vaswani, A., Shazeer, N., Parmar, N., Uszkoreit, J., Jones, L., Gomez, A.~N., Kaiser, {\L}., and Polosukhin, I.
\newblock Attention is all you need.
\newblock \emph{Advances in Neural Information Processing Systems}, 30, 2017.

\bibitem[Xiao et~al.(2023)Xiao, Tian, Chen, Han, and Lewis]{xiao2023efficient}
Xiao, G., Tian, Y., Chen, B., Han, S., and Lewis, M.
\newblock Efficient streaming language models with attention sinks.
\newblock \emph{arXiv preprint arXiv:2309.17453}, 2023.

\bibitem[Yadav et~al.(2019)Yadav, Bethard, and Surdeanu]{Yadav_2019}
Yadav, V., Bethard, S., and Surdeanu, M.
\newblock Quick and (not so) dirty: Unsupervised selection of justification sentences for multi-hop question answering.
\newblock In \emph{Proceedings of the 2019 Conference on Empirical Methods in Natural Language Processing and the 9th International Joint Conference on Natural Language Processing (EMNLP-IJCNLP)}. Association for Computational Linguistics, 2019.
\newblock \doi{10.18653/v1/d19-1260}.
\newblock URL \url{http://dx.doi.org/10.18653/v1/D19-1260}.

\bibitem[Yun et~al.(2020)Yun, Bhojanapalli, Rawat, Reddi, and Kumar]{yun2020transformers}
Yun, C., Bhojanapalli, S., Rawat, A.~S., Reddi, S.~J., and Kumar, S.
\newblock Are transformers universal approximators of sequence-to-sequence functions?
\newblock In \emph{International Conference on Learning Representations}, 2020.

\end{thebibliography}
\bibliographystyle{icml2024}

\newpage
\appendix
\onecolumn

\section{Detailed Explanation of the DA Issue}
\label{appendix:dispro}
\newcommand{\Inp}{X} 
\newcommand{\inp}{x} 
\newcommand{\Hid}{\Omega}
\newcommand{\Hidl}{\Omega^{(l)}}
\newcommand{\HidL}{\Omega^{(L)}}
\newcommand{\hid}{\omega}
\newcommand{\hidl}{\omega^{(l)}}
\newcommand{\hidL}{\omega^{(L)}}
\newcommand{\Out}{X_{n+1}} 
\newcommand{\Vocab}{\mathcal{V}}
\newcommand{\embdX}{X_E}
\newcommand{\NoPE}{}
The traditional dot-product attention makes the assumption that the next token is strongly related to the previous context. However, the mutual information \(I(\Inp_{\leq i}; \Out)=H(\Out) - H(\Out | \Inp_{\leq i})\) could be small, especially in the initial parts of the sequence. We formalize this (counter-)intuition by defining the following concepts:

\begin{definition} A \emph{causally isotropic} data distribution of \(N\) discrete random variables \(\Inp_1, \Inp_2, \dots, \Inp_N\) satisfies that for any set of indices \(\Lambda\subset [n]\), \(H(\Inp_{n+1} | \Inp_{\Lambda} = \inp_{\Lambda}) = H(\Inp_{n+1} | \Inp_{\Lambda})\) does not depend on the value of \(\inp_{\Lambda}\), where \(H\) denotes entropy\footnote{Causal isotropy is a strict condition. We use it for demonstration purposes only: it isolates the effect of data variability in judging the disproportionality of attention.}.
\end{definition}
\begin{definition} A \emph{layer-wise} decoder for a data distribution \(p(\Inp_1, \Inp_2, \dots, \Inp_N)\) accepts any data point \(\inp_{<N}\), and computes deterministically \(L\) layers of intermediate representations \(\Hidl_{<N}\), such that for \(n < N\), \(\Hidl_{n}\) only receives inputs from \(\Hid^{(l-1)}_{\leq n}\) (we define \(\Hid^{(0)}_{n}\) as \(\Inp_n\) or its embedding).
\end{definition}
\begin{definition} An \emph{contextual} layer-wise decoder satisfies that for any two possible inputs \(\inp_{<N}, \inp'_{<N}\) and \(n<N\), if \(p(\Inp_{n+1}|\inp_{\leq n}) \neq p(\Inp_{n+1}|\inp'_{\leq n})\), then \(\hidL_{n} \neq \hid'^{(L)}_{n}\), where \(\hidL_{ n}\) (\(\hid'^{(L)}_{ n}\)) is \(\HidL_{n}\) evaluated on input \(\inp_{\leq n}\) (\(\inp'_{\leq n}\)).
\end{definition}
Our definition of contextual decoder aligns with the definition of contextual mapping in previous works~\cite{yun2020transformers,kim2023provable}, which guarantees that certain different inputs are mapped to different representations, although their definition of contextual mapping is more focused on the seq2seq setting.

Next, we make the following observations:
\begin{proposition}\label{appendix:proposition}
For a layer-wise decoder on a data distribution, the prefixes of its intermediate representation at the \(l\)-th layer \(\Hidl_{\leq i}\) satisfy 
\begin{enumerate}
    \item \(H(\Out | \Hidl_{\leq i}) \geq H(\Out | \Inp_{\leq i})\) for all \(i\leq n\);
    \item \(H(\Out | \Hidl_{\leq n}) = H(\Out | \Inp_{\leq n})\) if the decoder is contextual;
    \item \(H(\Out | \Hidl_{\leq i} = \hid^{(l)}_{\leq i}) \geq H(\Out | \Inp_{\leq i})\) for all \(i\leq n\) and all \(\hid^{(l)}_{\leq i}\) if the data is causally isotropic;
    \item \(H(\Out | \Hidl_{ \leq n} = \hid^{(l)}_{ \leq n}) = H(\Out | \Inp_{\leq n})\) for all \(\hid^{(l)}_{ \leq n}\) if the decoder is contextual and the data is causally isotropic.
\end{enumerate}
\end{proposition}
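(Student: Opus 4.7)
The plan hinges on one structural observation: by induction on $l$, the intermediate representation $\Hidl_{\leq i}$ is a deterministic function of the input prefix $\Inp_{\leq i}$, since each layer at position $n$ draws inputs only from $\Hid^{(l-1)}_{\leq n}$ and $\Hid^{(0)}_n = \Inp_n$. This yields a Markov chain $\Out \to \Inp_{\leq i} \to \Hidl_{\leq i}$ in which the second arrow is a deterministic map, and every claim follows from this together with one convexity argument.

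First I would dispatch (1) by the data processing inequality, or equivalently by the chain of equalities and inequalities $H(\Out \mid \Inp_{\leq i}) = H(\Out \mid \Inp_{\leq i}, \Hidl_{\leq i}) \leq H(\Out \mid \Hidl_{\leq i})$, which uses only that $\Hidl_{\leq i}$ is a function of $\Inp_{\leq i}$. For (3) I would expand the conditional distribution as a mixture over the preimage of $\hid^{(l)}_{\leq i}$,
\begin{equation*}
p(\Out \mid \Hidl_{\leq i} = \hid^{(l)}_{\leq i}) = \sum_{\inp_{\leq i}} p(\inp_{\leq i} \mid \hid^{(l)}_{\leq i}) \, p(\Out \mid \inp_{\leq i}),
\end{equation*}
and apply concavity of Shannon entropy. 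Jensen's inequality then lower-bounds $H(\Out \mid \Hidl_{\leq i} = \hid^{(l)}_{\leq i})$ by the mixture of $H(\Out \mid \Inp_{\leq i} = \inp_{\leq i})$, and causal isotropy collapses each per-value entropy to the single constant $H(\Out \mid \Inp_{\leq i})$, delivering the bound.

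For (2), the contextual property is only stated at the final layer $L$, so I would first establish a short backward-propagation lemma: because $\HidL_n$ is computed from $\Hidl_{\leq n}$ by further deterministic layers, any two inputs that agree on $\Hidl_{\leq n}$ also agree on $\hidL_n$, so under the contextual hypothesis their predictive distributions $p(\Out \mid \Inp_{\leq n})$ must coincide. Consequently, the preimage of every realized $\hid^{(l)}_{\leq n}$ consists of inputs sharing a common $p(\Out \mid \Inp_{\leq n})$, turning the Jensen step above into an equality and giving $H(\Out \mid \Hidl_{\leq n} = \hid^{(l)}_{\leq n}) = H(\Out \mid \Inp_{\leq n} = \inp_{\leq n})$ for any $\inp_{\leq n}$ in that preimage. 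Averaging over $\hid^{(l)}_{\leq n}$ yields (2); combining this preimage argument with causal isotropy immediately yields (4).

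The only step requiring genuine care is the backward propagation of contextuality from layer $L$ to an arbitrary intermediate layer $l$; the entropy manipulations themselves are standard. I therefore expect the main obstacle to be stating and justifying that propagation lemma cleanly, not the calculations that follow.
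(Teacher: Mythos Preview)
Your proposal is correct and follows essentially the same route as the paper: the data processing inequality for (1), the mixture-over-preimage plus Jensen plus causal isotropy for (3), and for (2) and (4) the observation that inputs sharing the same $\hidl_{\leq n}$ must share the same $\hidL_n$ (your ``backward-propagation lemma''), hence the same $p(\Out\mid\inp_{\leq n})$, collapsing the mixture. The paper writes that lemma as a one-line chain of implications rather than isolating it, but the content is identical.
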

\begin{proof} \begin{enumerate}
    \item Notice that \(\Out \rightarrow \Inp_{\leq i} \rightarrow \Hidl_{\leq i}\) is a Markov chain. By the data processing inequality~\cite{polyanskiy2016strong}, \(I(\Hidl_{\leq i};\Out) \leq I(\Inp_{\leq i}; \Out)\implies H(\Out | \Hidl_{\leq i}) \geq H(\Out | \Inp_{\leq i})\).
    \item For any \(\hidl_{\leq n}\),
let \(\kappa(\hidl_{\leq n})\) be the set of inputs where \(p(\inp_{\leq n}|\hidl_{\leq n}) > 0\), which is equivalent to \(p(\hidl_{\leq n}|\inp_{\leq n}) = 1\) by the deterministic nature of decoder.

By the definition of contextual layer-wise decoder,
\begin{align}
    \forall \inp_{\leq n}, \inp'_{\leq n}\in \kappa(\hidl_{\leq n}) &\implies \hidl_{\leq n} = \hid'^{(l)}_{\leq n} \implies \hidL_{\leq n} = \hid'^{(L)}_{\leq n} \nonumber\\ &\implies \hidL_{n} = \hid'^{(L)}_{n} \implies p(\Out|\inp_{\leq n}) = p(\Out|\inp'_{\leq n}).
\end{align}
Therefore,
\begin{align}
    p(\Out|\hidl_{\leq n}) &= \sum_{\inp_{\leq n}\in \kappa(\hidl_{\leq n})} p(\Out|\inp_{\leq n}, \hidl_{\leq n}) p(\inp_{\leq n}|\hidl_{\leq n}) \nonumber\\
\text{(by conditional independence)}~~~ &= \sum_{\inp_{\leq n}\in \kappa(\hidl_{\leq n})} p(\Out|\inp_{\leq n}) p(\inp_{\leq n}|\hidl_{\leq n}) \label{eq:cond_indep}\\
&= p(\Out|\inp_{\leq n}), ~\forall \inp_{\leq n}\in \kappa(\hidl_{\leq n}) \nonumber\\
    \implies H(\Out|\Hidl_{\leq n}=\hidl_{\leq n}) &= H(\Out|\Inp_{\leq n}=\inp_{\leq n}), ~\forall \inp_{\leq n}\in \kappa(\hidl_{\leq n}) \label{eq:entropy_equal}
\end{align}
\begin{align}
\implies H(\Out|\Hidl_{\leq n}) &= \sum_{\hidl_{\leq n}} p(\hidl_{\leq n}) H(\Out|\Hidl_{\leq n}=\hidl_{\leq n}) = \sum_{\hidl_{\leq n}} \left(\sum_{\inp_{\leq n}\in \kappa(\hidl_{\leq n})} p(\inp_{\leq n})\right) H(\Out|\Hidl_{\leq n}=\hidl_{\leq n}) \nonumber\\
&= \sum_{\inp_{\leq n}} p(\inp_{\leq n}) H(\Out|\Inp_{\leq n}=\inp_{\leq n}) = H(\Out|\Inp_{\leq n}).
\end{align}
    \item
Note that \eqref{eq:cond_indep} can be written as a weighted average, which we denote as \(\mathrm{avg}_{\kappa}\):
\begin{align}
p(\inp_{n+1}|\hidl_{\leq n})
    &= \mathrm{avg}_{\kappa} p(\inp_{n+1}|\inp_{\leq n}),~\forall \inp_{n+1}.
\end{align}
Similarly, with a slightly different definition of \(\kappa\),
\begin{align}
p(\inp_{n+1}|\hidl_{\leq i})
    &= \mathrm{avg}_{\kappa} p(\inp_{n+1}|\inp_{\leq i}), \forall x_{n+1}.
\end{align}
Apply Jensen's inequality to the function \(-x \log x\), we have for any \(x_{n+1}\),
\begin{align}
-p(\inp_{n+1}|\hidl_{\leq i}) \log p(\inp_{n+1}|\hidl_{\leq i})
&= -\left(\mathrm{avg}_{\kappa} p(\inp_{n+1}|\inp_{\leq i})\right) \log \left(\mathrm{avg}_{\kappa} p(\inp_{n+1}|\inp_{\leq i})\right) \nonumber\\
&\geq \mathrm{avg}_{\kappa} \left(-p(\inp_{n+1}|\inp_{\leq i}) \log p(\inp_{n+1}|\inp_{\leq i})\right). \nonumber\\
\end{align}
Therefore,
\begin{align}
    H(\Out|\Hidl_{\leq i} = \hidl_{\leq i}) &= -\sum_{\inp_{n+1}} p(\inp_{n+1}|\hidl_{\leq i}) \log p(\inp_{n+1}|\hidl_{\leq i}) \nonumber\\
    &\geq\sum_{\inp_{n+1}} \mathrm{avg}_{\kappa}\left(-p(\inp_{n+1}|\inp_{\leq i}) \log p(\inp_{n+1}|\inp_{\leq i})\right) \nonumber\\
    &=\mathrm{avg}_{\kappa} \sum_{\inp_{n+1}} -p(\inp_{n+1}|\inp_{\leq i}) \log p(\inp_{n+1}|\inp_{\leq i}) \nonumber\\
    &=\mathrm{avg}_{\kappa} H(\Out|\Inp_{\leq i} = \inp_{\leq i}) \nonumber\\
\text{(by causal isotropy)}~~~ &=H(\Out|\Inp_{\leq i}).
\end{align}
\item Apply causal isotropy to \eqref{eq:entropy_equal}.
\end{enumerate}
\end{proof}
\begin{figure*}[ht]
    \centering
    \includegraphics[width=0.6\textwidth]{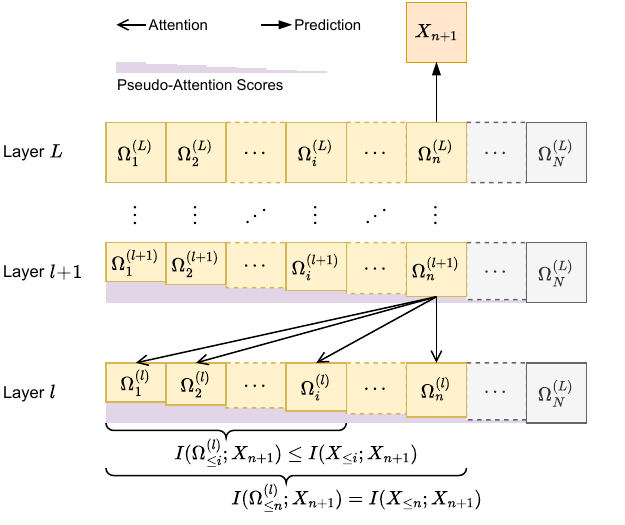}
    \caption{The DA issue and the proposed solution of adding pseudo-attention scores. The rationale behind is that through learning, a decoder should learn to avoid paying too much attention to where \(H(\Out|\Hidl_{\leq i} = \hidl_{\leq i})\) is high, because such places provide little mutual information with respect to the prediction goal.}
    \label{fig:DA_appendix}
\end{figure*}
We are now ready to define the disproportionality of attention:
\begin{definition} Let inputs sampled from a data distribution \(p(\Inp_1, \Inp_2, \dots, \Inp_N)\) run through a contextual layer-wise decoder with attention layers. If for at least one possible input \(\inp_{<N}\), the attention \(\tilde{A}^{(l)}\) after softmax in the \(l\)-th layer satisfy
\begin{equation}    
\sum_{j\leq i} \tilde{A}^{(l)}_{nj} > \frac{I(\Inp_{\leq i};\Inp_{n+1})}{I(\Inp_{\leq n};\Inp_{n+1})}\sum_{j\leq n} \tilde{A}^{(l)}_{nj} + \varepsilon
\end{equation}
for some \(i < n < N\) and \(I(\Inp_{\leq n};\Inp_{n+1})>0\), then this attention layer is said to have disproportional attention towards initial tokens on this input. The overall degree of disproportionality of an attention layer can be measured by the total probability of such inputs \(\sum_{\inp_{<N}} p(\inp_{<N})\).
\end{definition}
Note that by Proposition~\ref{appendix:proposition}, the following always holds:
\begin{equation}
    \frac{I(\Inp_{\leq i};\Inp_{n+1})}{I(\Inp_{\leq n};\Inp_{n+1})} = \frac{H(\Out) - H(\Out|\Inp_{\leq i})}{H(\Out) - H(\Out|\Inp_{\leq n})} \geq \frac{H(\Out) - H(\Out|\Hidl_{\leq i})}{H(\Out) - H(\Out|\Hidl_{\leq n})} = \frac{I(\Hidl_{\leq i};\Inp_{n+1})}{I(\Hidl_{\leq n};\Inp_{n+1})}.
\end{equation}
This justifies our choice of the threshold \(\frac{I(\Inp_{\leq i};\Inp_{n+1})}{I(\Inp_{\leq n};\Inp_{n+1})}\) for detecting the disproportionality of attention. Moreover, if the data is causally isotropic, the specific values of data do not matter for how much attention the model should pay.

In this work, we handle the DA problem by pseudo-attention scores and we offer a probabilistic interpretation. First, we clarify that the problem does not lie in the query-key-value mechanism of attention, but rather lies in the nature of autoregression: the history does not represent a complete description of the future, and the probability that the future deviates from the history must be taken into account, and more so at the beginning. Thus the output of an attention layer at earlier positions should be able to signal to the subsequent layers a higher variance of estimation compared to later positions. The failure of reliably doing so leads to the model having to allocate computation elsewhere to rectify the signal, such as excessive attention towards irrelevant tokens~\cite{xiao2023efficient} and ``no-op'' heads~\cite{bondarenko2023quantizable}, or becoming totally paralyzed (Appendix~\ref{appendix:edge-case}). StableMask parameterizes this inductive bias orthogonal to decoder-only Transformers with RPE by pseudo-attention scores in the causal mask that decays over time.

\section{Further Explanation of Position Encoding}
\subsection{The Unit Test of Absolute Position-Awareness}
\label{appendix:edge-case}
Training a decoder-only Transformer with no PE will fail on data points that consist of all identical tokens, because the outputs of each layer are all identical vectors. Consequently, it is impossible for the model to predict different output distributions at different positions. We regard such all-identical inputs with different outputs at different positions as the ``unit test'' of absolute position awareness. We showed that Transformers with RPE cannot pass this test (Appendix~\ref{sec:experiment_rpe}).

One way to pass the test without using explicit PE was proposed, by prepending a special \(\langle bos \rangle\) token to the input sequence~\cite{kazemnejad2023impact}. It breaks the symmetry in all positions and provides a way for the decoder to recognize absolute position. We note that this solution is equivalent to the AT-based method used to solve the DA issue (Section~\ref{sec:da}). This inspires us to see the test from the viewpoint of DA. Indeed, we have
\begin{theorem}
There exists a causally isotropic data distribution (defined in Appendix~\ref{appendix:dispro}) such that any regular Transformer decoder has a high probability of being (weakly) disproportional in all of its attention layers.
\end{theorem}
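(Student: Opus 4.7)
The plan is to exhibit a concrete causally isotropic distribution on which the strict positivity of softmax in a regular Transformer directly forces disproportional attention, and to argue that this failure happens simultaneously in every attention layer with probability one.

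First I would define the distribution. Fix a finite vocabulary $\mathcal{V}$ with $|\mathcal{V}|\geq 2$ and a horizon $N\geq 3$; draw $X_1,\ldots,X_{N-1}$ i.i.d.\ uniform on $\mathcal{V}$ and set $X_N := X_{N-1}$ (or any deterministic non-constant function thereof). I would verify causal isotropy directly: for any index set $\Lambda\subsetneq[N]$ and any $n+1\notin\Lambda$, the conditional entropy $H(X_{n+1}\mid X_{\Lambda}=x_{\Lambda})$ equals $0$ when $n+1=N$ and $N-1\in\Lambda$, and equals $\log|\mathcal{V}|$ otherwise, so in particular it does not depend on the value $x_{\Lambda}$.

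Second I would compute the mutual information ratio. By the independence of $X_{N-1}$ from $X_{<N-1}$, one has $I(X_{\leq i};X_{N})=0$ for every $i<N-1$ while $I(X_{\leq N-1};X_{N})=H(X_{N})=\log|\mathcal{V}|>0$, so the ratio $I(X_{\leq i};X_{N})/I(X_{\leq N-1};X_{N})$ vanishes for all $i<N-1$. The disproportionality criterion at position $n=N-1$ therefore collapses to $\sum_{j\leq i}\tilde{A}^{(l)}_{N-1,j}>\varepsilon$ for some $i<N-1$. Third I would invoke softmax positivity: in any regular Transformer the pre-softmax scores at layer $l$ are bounded by some finite constant $B$ (the only regularity hypothesis I use), yielding $\tilde{A}^{(l)}_{N-1,j}\geq e^{-2B}/N$ for every $j\leq N-1$. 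Choosing $\varepsilon=e^{-2B}/(2N)>0$, the index $i=1$ already gives $\tilde{A}^{(l)}_{N-1,1}>\varepsilon$, so disproportionality holds in layer $l$ on this input. Since the argument is uniform in $l$ and in the input, every element of the support triggers disproportionality in all layers simultaneously, and the total probability equals $1$.

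The main obstacle is pinning down the regularity assumption: an arbitrary Transformer with unbounded attention logits could drive $\tilde{A}^{(l)}_{N-1,1}$ to zero and sabotage the lower bound in the third step. The cleanest fix, and the one I expect the authors to rely on, is to restrict to decoders whose parameters and hidden-state inputs are bounded, which is automatic under standard layer normalization; an alternative is to let $\varepsilon$ depend on the decoder, which is consistent with the ``weakly'' qualifier in the theorem statement. A secondary subtlety is that deeper layers act on contextual hidden states rather than on the raw embeddings, so one must verify inductively that the bound on pre-softmax scores, and hence the softmax positivity lower bound, propagates through depth; this follows from the boundedness assumption together with the Lipschitz continuity of the attention, feed-forward, and normalization sublayers.
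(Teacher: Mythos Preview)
Your proof is correct and takes a different route from the paper. Both hinge on softmax strict positivity, but the constructions differ. The paper builds a \emph{softCopyLast} distribution in which each token copies the previous one with probability $1-e^{-n}$ and is otherwise uniform; this concentrates mass on all-identical sequences (tying the theorem to the unit-test discussion) but leaves $I(X_{\leq i};X_{n+1})>0$ for $i<n$, so the threshold ratio in the disproportionality definition does not vanish. The paper deals with this by invoking the conditional mutual information $I(X_{\leq i};X_{n+1}\mid X_n)=0$ and a ``weak'' form of disproportionality that it never fully spells out. Your i.i.d.-then-deterministic-copy construction drives the unconditional ratio to exactly zero at $n=N-1$, so the criterion collapses to strict positivity of a single softmax entry and holds for every input with probability one. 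You lose the thematic link to all-identical sequences, but you gain an argument that works verbatim under the paper's stated definition, with the ``weakly'' qualifier absorbed precisely by letting $\varepsilon$ depend on the decoder---exactly the alternative you flag. Your treatment of the boundedness obstacle through depth is also more careful than anything in the paper's own proof.
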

\begin{proof}
    Consider the following \(\mathrm{softCopyLast}\) task: for any input \(x_{<n}\), output the last token with probability \(1 - e^{-n}\), or a random token otherwise. The training dataset is constructed by a sampling algorithm that correctly does the task repeatedly.
    
    The training dataset is causally isotropic: for every set of observed variables \(x_{\Lambda}\), \(\Lambda\subset [n]\), \(H(\Out|\Inp_{\Lambda}=\inp_{\Lambda})\) depends only on the largest element of \(\Lambda\), not on the specific values of variables.

    Moreover, the probability density of this dataset concentrates most on the all-identical sequences, because as time goes on, sequences in the dataset are increasingly likely to copy themselves.

    Last, we need to check that regular Transformer decoders have (weakly) disproportional attention on all identical sequences in all the attention layers. Note that although \(I(\Inp_{\leq i};\Inp_{n+1}) > 0\) for \(i<n\), \(I(\Inp_{\leq i}; \Out | \Inp_{n}) = 0\) holds because of conditional independence between \(\Inp_{\leq i}\) and \(\Out\) given \(X_n\). On the other hand, \(\sum_{j\leq i} \tilde{A}^{(l)}_{nj} > \varepsilon\) holds because the softmax in a regular Transformer always gives positive attention. So the model has a weak disproportional attention towards initial tokens.
\end{proof}
Intuitively speaking, if the inputs are all identical, then the model only needs to know the last token and the sequence length in order to decide the output. All other attention can be regarded as (weakly) disproportional. However, inputs constructed this way only account for an exponentially small total probability in real datasets, so we separate this issue from the issue of disproportional attention.

\subsection{Experiment of RPE's Inability to Encode Absolute Position}
\label{sec:experiment_rpe}
To demonstrate that RPE cannot encode absolute positional information as discussed in Section~\ref{sec:inability}, we designed several experiments that require knowledge of absolute positional relationships. These experiments primarily include three tasks:

\begin{enumerate}[label=(\arabic*)]
    \item Absolute Position Mapping: Given an input sequence of ``\(0~0~0~0~0~\ldots\)'', the model needs to accurately map each position to its absolute position. In other words, we expect an output of ``\(1~2~3~4~5~\ldots\)''.
    \item Absolute Position Identification: Given an input sequence of ``\(0~0~0~\ldots~\texttt{[ABE]}~0~0~\ldots\)'', where \texttt{[ABE]} encodes a special character at a specific position, the model needs to output the absolute position corresponding to the location encoded by \texttt{[ABE]}. In this case, we expect an output of ``\(0~0~0~\ldots~n~0~0~\ldots\)'', where \(n\) represents the current position.
    \item Odd-Even Number Counting: Given an input sequence of ``\(0~0~0~0~0~\ldots\)'', the model needs to output a sequence of consecutive odd and even numbers, such as ``\(1~2~1~2~\ldots\)''. This task also relies on the model's ability to recognize absolute positional information.
\end{enumerate}

\begin{table}[ht]
    \centering
    \small
    \begin{tabular}{l|ccc}
        \midrule
         & \multicolumn{3}{c}{Accuracy}\\
        \midrule
         PE*& Task (1)& Task (2)& Task (3)\\
         \midrule
         \multicolumn{4}{l}{\textit{{APE}}}\\
         \midrule
         Learnable& \textbf{96.7}\%& \textbf{94.3}\%&\textbf{97.6}\%\\
         Sinusoidal& \textbf{98.1}\%& \textbf{99.1}\%& \textbf{96.2}\%\\
         \midrule
         \multicolumn{4}{l}{\textit{{RPE}}}\\
         \midrule
         ALiBi & 21.7\%& 26.7\%& 46.5\%\\
         T5& 22.4\%& 24.5\%&42.7\%\\
         RoPE & 25.3\%& 24.7\%& 43.1\%\\
         \bottomrule
 \multicolumn{4}{l}{\textit{*: positional encoding type}}\\
    \end{tabular}
    \caption{
    Experiment settings and Results of RPE's inability to encode absolute position.
    We designed three datasets that rely on absolute position information and calculated average accuracy on these tasks. The results show that RPE performs poorly.  This demonstrates that the position information encoded during the softmax process in RPE is shadowed.
    }
    \label{tab:position}
\end{table}

Our experiments were conducted using a model with 160 million parameters, trained on four V100 GPUs. For detailed training hyperparameters, one can refer to the training details on the Wikitext-103 dataset (Appendix~\ref{appendix:training_details}).

\section{StableMask Encodes Absolute Positional Information}
\label{appendix:absolute}
In this section, we present how StableMask can recover absolute positions in the hidden state using fewer portions of the model, than prepending a special \(\langle bos \rangle\) token to the sequence (Appendix~\ref{appendix:edge-case}). Our proof is inspired by NoPE~\cite{kazemnejad2023impact} but differs substantially in that they require three dimensions of hidden states at free disposal, while ours only needs two and is arguably more natural.
\begin{theorem}
    Let \(X = [\boldsymbol{x}_1, \cdots, \boldsymbol{x}_n]_n\) be an input sequence of length \(n\) to the StableMask model \(f^{\text{(SM)}}_T\). Then, 
    the first layer of \(f^{\text{(SM)}}_T\) can recover absolute positions \([1,2,\dots,n]\) in the hidden state \(\Omega^{(1)}\). That is, there exist \(W_Q\), \(W_K\), \(W_V\) and \(W_O\) for the first attention layer, along with \(W_1\) and \(W_2\) for the first feed-forward layer, that computes absolute positions and pass them to the next layer.
\end{theorem}
\begin{proof}
We focus on the goal of reconstructing an index-dependent function \(\xi_{i}=i / (i + \sum_{j=i}^{n-1} e^{-j\gamma})\) at the end of the first attention layer. After reconstructing \(\xi_{i}\), recovering \(i\) from it can be done by the universal approximation power of feed-forward networks~\cite{park2021minimum}.

For this, we need to gain control of a single head in the first attention layer, and use two hidden dimensions in the embedding layer. Note that this approach does not alter the rest of the Transformer model.

First, we specify the word embedding matrix $W_E \in \mathbb{R}^{d \times \Vocab}$ as follows: the first row of $W_E$ is set to 1, which serves as the input vector; The second row of $W_E$ is set to 0, which serves as the output vector. Then, we have:
\begin{equation}
    \setlength{\arraycolsep}{3pt}
    W_E = \begin{pmatrix}
            1       & 1       & \dots  & 1       \\
            0       & 0       & \dots  & 0       \\
            e_{3,1} & e_{3,2} & \dots  & e_{3,\Vocab} \\
            \vdots  & \vdots  & \ddots & \vdots  \\
            e_{d,1} & e_{d,2} & \dots  & e_{d,\Vocab}
          \end{pmatrix}_{d \times \Vocab}
\end{equation}
where $e_{i,j} \in \mathbb{R}$. The word embeddings for the input sequence $X = [x_1, \dots, x_n]_n$ are retrieved from the embedding matrix $W_E$ by:
\begin{equation}
    \setlength{\arraycolsep}{3pt}
    \embdX = W_E[X] = \begin{pmatrix}
                        1       & 1       & \dots  & 1       \\
                        0       & 0       & \dots  & 0       \\
                        e_{3,x_1} & e_{3,x_2} & \dots  & e_{3,x_n} \\
                        \vdots  & \vdots  & \ddots & \vdots  \\
                        e_{d,x_1} & e_{d,x_2} & \dots  & e_{d,x_n}
                      \end{pmatrix}_{d \times n}
\end{equation}
Second, for head dimension $h\ge 1$, we specify the weights $W_Q, W_K, W_V, W_O$ of the selected attention head in the first layer. Specifically, we set \(W_Q=W_K=0\), and
\begin{equation}
    \setlength{\arraycolsep}{3pt}
    W_V = \begin{pmatrix}
              1      & 0      & \dots  & 0      \\
              0      & 0      & \dots  & 0      \\
              \vdots & \vdots & \ddots & \vdots \\
              0      & 0      & \dots  & 0
            \end{pmatrix}_{h \times d},\quad
    W_O = \begin{pmatrix}
              0      & 0      & \dots  & 0      \\
              1      & 0      & \dots  & 0      \\
              \vdots & \vdots & \ddots & \vdots \\
              0      & 0      & \dots  & 0
    \end{pmatrix}_{d\times h}.
\end{equation}
Consequently, all the query-key matching results are zero:
\begin{equation}
W_K \embdX = W_Q \embdX = 0_{h\times n},\quad A = (W_Q \embdX)^\top(W_K \embdX) = 0_{n\times n},    
\end{equation}
while $W_V$ takes the first row of \(X_E\), which is the input vector, and sets everywhere else zero:
\begin{equation}
    \setlength{\arraycolsep}{3pt}
    W_V \embdX = \begin{pmatrix}
              1      & 1      & \dots  & 1      \\
              0      & 0      & \dots  & 0      \\
              \vdots & \vdots & \ddots & \vdots \\
              0      & 0      & \dots  & 0
            \end{pmatrix}_{h \times n}
\end{equation}
We now calculate the output of attention. First, since the key-query matching results are all zero, the attention score matrix with StableMask is
\begin{equation}
    \setlength{\arraycolsep}{3pt}
    A_{\text{SM}} = A \odot C + P = \begin{pmatrix}
         0 & -\gamma & \cdots & -(n-1)\gamma \\
         0 & 0 & \cdots & -(n-1)\gamma \\
        \vdots & \vdots & \ddots & \vdots \\
         0 & 0 &\cdots & 0 \\
    \end{pmatrix}_{n\times n}
\end{equation}
Therefore,
\begin{equation}
    \setlength{\arraycolsep}{3pt}
    \tilde{A} = \mathrm{Softmax}(A_{\text{SM}}) \odot C = \begin{pmatrix}
         1 / (1 + \sum_{i=1}^{n-1} e^{-i\gamma}) & 0 & \cdots & 0 \\
         1 / (2 + \sum_{i=2}^{n-1} e^{-i\gamma}) & 1 / (2 + \sum_{i=2}^{n-1} e^{-i\gamma})  & \cdots & 0 \\
        \vdots & \vdots &  \ddots & \vdots \\
         1 / n & 1 / n & \cdots & 1 / n \\
    \end{pmatrix}_{n\times n}
\end{equation}
\begin{equation}
    \setlength{\arraycolsep}{3pt}
    \tilde{A} (W_V X_E)^\top = \begin{pmatrix}
         1 / (1 + \sum_{i=1}^{n-1} e^{-i\gamma}) & 0 & \cdots & 0 \\
         2 / (2 + \sum_{i=2}^{n-1} e^{-i\gamma}) & 0  & \cdots & 0 \\
         \vdots & \vdots &  \ddots & \vdots \\
         1 & 0 & \cdots & 0 \\
    \end{pmatrix}_{n\times h} = \begin{pmatrix}
         \xi_1 & 0 & \cdots & 0 \\
         \xi_2 & 0  & \cdots & 0 \\
         \vdots & \vdots &  \ddots & \vdots \\
         \xi_n & 0 & \cdots & 0 \\
    \end{pmatrix}_{n\times h}
\end{equation}
Finally, \(W_O\) is used to move the first row of \((\tilde{A} (W_V X_E)^\top)^\top\) to the second row:
\begin{equation}
    \setlength{\arraycolsep}{3pt}
    W_O (\tilde{A} (W_V X_E)^\top)^\top = \begin{pmatrix}
         0 & 0 & \cdots & 0 \\
         \xi_1 & \xi_2  & \cdots & \xi_n \\
         \vdots & \vdots &  \ddots & \vdots \\
         0 & 0 & \cdots & 0 \\
    \end{pmatrix}_{d\times n}
\end{equation}
Adding the residuals back to the input, we are done:
\begin{equation}
    \setlength{\arraycolsep}{3pt}
    X_E + \sum_\mathrm{h} W^{(\mathrm{h})}_O (\tilde{A}^{(\mathrm{h})} (W^{(\mathrm{h})}_V X_E)^\top)^\top = \begin{pmatrix}
         1 & 1 & \cdots & 1 \\
         \xi_1 & \xi_2  & \cdots & \xi_n \\
         * & * & \cdots & * \\
         \vdots & \vdots &  \ddots & \vdots \\
         * & * & \cdots & * \\
    \end{pmatrix}_{d\times n}
\end{equation}
where \(*\) denotes values computed by other heads in the first layer, which we assumed to not interfere with the first two hidden dimensions.
\end{proof}

\section{\flashattn with StableMask}
\label{appendix:flash_att}
\subsection{Introduction to \flashattn}
\flashattn~\cite{dao2022flashattention} is a state-of-the-art method designed to enhance the performance of attention mechanisms in Transformer models, particularly addressing the efficiency constraints imposed by modern GPU memory hierarchies. Traditional attention mechanisms suffer from significant computational overhead, predominantly due to the necessity of storing and accessing large intermediate matrices, such as the softmax-normalized attention scores, from the High Bandwidth Memory (HBM). This process is inherently memory-bound due to the quadratic dependency on the sequence length, leading to extensive memory accesses and thus increased wall-clock time.

The A100 GPU, for instance, showcases the discrepancy in memory speeds within its hierarchy, having a significantly faster on-chip SRAM compared to the larger HBM. \flashattn optimizes for this architectural detail by reducing HBM reads and writes. It achieves a sub-quadratic number of HBM accesses by employing techniques like tiling and recomputation, which allow for the attention computation to be performed in smaller, more manageable blocks within the on-chip SRAM. This block-based approach mitigates the need to store large intermediate matrices, especially beneficial during the backward pass of model training where intermediate values are traditionally saved to HBM.

Furthermore, \flashattn incorporates kernel fusion in its implementation, enabling a single CUDA kernel to handle the entire computation process -- from loading inputs from HBM, through all the computation steps (such as matrix multiplication and softmax), to writing the results back to HBM. This minimizes the frequency of costly memory accesses and contributes to an overall faster computation, without compromising the accuracy of the attention mechanism. As a result, \flashattn stands out as an efficient primitive for both memory-bound and compute-bound operations within the GPU's memory hierarchy, offering a significant improvement in the execution of Transformer models.

\subsection{Derivation}

In the \flashattn paradigm, the query \( Q \in \mathbb{R}^{n \times d_H} \), key \( K \in \mathbb{R}^{n \times d_H} \), and value \( V \in \mathbb{R}^{n \times d_H} \) matrices are partitioned into \( Tr = \frac{n}{Br} \) blocks \( Q_1, \ldots, Q_{Tr} \), \( K_1, \ldots, K_{Tr} \), \( V_1, \ldots, V_{Tr} \), each of dimension \( \mathbb{R}^{Br \times d_H} \). Then each block \( Q_i, K_j, V_i \) is fetched for computation. The attention scores \( S_i^{(j)} \) for blocks \( Q_i \) and \( K_j \) are derived from the on-chip computation:
    $S_i^{(j)} = Q_i K_j^T \in \mathbb{R}^{Br \times Br}$.
With the incorporation of StableMask, two additional on-chip operations are introduced:
\begin{equation}
    S_i^{(j)} = (Q_i K_j^T) \odot C_i^{(j)} + P_i^{(j)},
\end{equation}
where \( P \) and \( C \) correspond to the StableMask matrices, segmented into \( Tr \times Tr \) blocks with \( P_i^{(j)}, C_i^{(j)} \in \mathbb{R}^{Br \times Br} \), and loaded on-chip. The safe softmax operation, analogous to that in \flashattn, proceeds as follows:
\begin{eqnarray}
    m_i^{(j)} &=& \max(m_i^{(j-1)}, \mathrm{rowmax}(S_i^{(j)})) \in \mathbb{R}^{Br},\\
    \tilde{S_i^{(j)}} &=& \exp(S_i^{(j)} - m_i^{(j)}) \in \mathbb{R}^{Br \times Br}, \\
    l_i^{(j)} &=& e^{m_i^{(j)} - m_i^{(j-1)}} l_i^{(j-1)} + \mathrm{rowsum}(\tilde{S_i^{(j)}}) \in \mathbb{R}^{Br}.
\end{eqnarray}
Subsequently, the algorithm rectifies the attention score matrix to account for zeros necessitated by the causal mask, so the final output \( O_i^{(j)} \) is computed as:
\begin{equation}
    O_i^{(j)} = \mathrm{diag}(e^{m_i^{(j)} - m_i^{(j-1)}})^{-1} O_i^{(j-1)} + (\tilde{S_i^{(j)}} \odot C_i^{(j)}) V_i.
\end{equation}

\subsection{A Typical Implementation of \flashattn 2}

\begin{algorithm*}[!h]
  \caption{\small Forward pass}
  \begin{algorithmic}[1]
    \REQUIRE Matrices $\vQ, \vK, \vV, \mathbf{C}, \mathbf{B} \in \mathbb{R}^{N \times d}$ in HBM, block sizes $B_c$, $B_r$.
    \STATE \label{alg:stream_attn_split_qkv} Divide $\vQ$ into $T_r = \left\lceil\frac{N}{B_r} \right\rceil$ blocks $\vQ_1, \dots, \vQ_{T_r}$ of size $B_r \times d$ each,
    and divide $\vK, \vV$ in to $T_c = \left\lceil \frac{N}{B_c} \right\rceil$ blocks $\vK_1, \dots, \vK_{T_c}$ and
    $\vV_1, \dots, \vV_{T_c}$, of size $B_c \times d$ each. 
    \textcolor{purple}{Divide $\mathbf{C}, \mathbf{P}$ in to $T_r \times T_c  = \left\lceil \frac{N}{B_r} \right\rceil \times \left\lceil \frac{N}{B_c} \right\rceil$ blocks $\mathbf{C}_1, \dots, \mathbf{C}_{T_r}$ and
    $\mathbf{P}_1, \dots, \mathbf{P}_{T_c}$, of size $B_r \times B_c$ each.}
    \STATE Divide the output $\vO \in \mathbb{R}^{N \times d}$ into $T_r$ blocks $\vO_i, \dots, \vO_{T_r}$ of size
    $B_r \times d$ each, and divide the logsumexp $L$ into $T_r$ blocks $L_i, \dots, L_{T_r}$ of size
    $B_r$ each.
    \FOR{$1 \le i \le T_r$} \label{alg:stream_attn_outer_loop}
      \STATE \label{alg:stream_attn_load_q} Load $\vQ_i$ from HBM to on-chip SRAM.
      \STATE \label{alg:stream_attn_init} On chip, initialize $\vO_{i}^{(0)} = (0)_{B_r \times d} \in \mathbb{R}^{B_r \times d}, \ell_{i}^{(0)} = (0)_{B_r} \in \mathbb{R}^{B_r}, m_{i}^{(0)} = (-\infty)_{B_r} \in \mathbb{R}^{B_r}$.
      \FOR{$1 \le j \le T_c$}
        \STATE \label{alg:stream_attn_load_kv} Load $\vK_j, \vV_j$ \textcolor{purple}{$\mathbf{C}_i^{(j)}, \mathbf{P}_i^{(j)}$} from HBM to on-chip SRAM.
        \STATE \label{alg:stream_attn_qk} On chip, compute $\vS_{i}^{(j)} = \vQ_i \vK_j^T \textcolor{purple}{\odot \mathbf{C}_i^{(j)} + \mathbf{P}_i^{(j)}} \in \mathbb{R}^{B_r \times B_c}$.
        \STATE \label{alg:stream_attn_statistics} On chip, compute
        $m_{i}^{(j)} = \mathrm{max}(m_{i}^{(j-1)}, \mathrm{rowmax}(\vS_{i}^{(j)})) \in \mathbb{R}^{B_r}$, $\tilde{\vP}_{i}^{(j)} = \exp(\vS_{i}^{(j)} - m_{i}^{(j)}) \in \mathbb{R}^{B_r \times B_c}$ (pointwise),
        $\ell_{i}^{(j)} = e^{m_{i}^{j-1} - m_{i}^{(j)}} \ell_{i}^{(j-1)} + \mathrm{row sum}(\tilde{\vP}_{i}^{(j)}) \in \mathbb{R}^{B_r}$.
        \STATE  \textcolor{purple}{On chip, compute $\tilde{\mathbf{D}}_{i}^{(j)} \tilde{\vP}_{i}^{(j)} \odot \mathbf{C}_i$}.
        \STATE \label{alg:stream_attn_update} On chip, compute
        $\vO_{i}^{(j)} = \diag(e^{m_{i}^{(j-1)} - m_{i}^{(j)}})^{-1} \vO_{i}^{(j-1)} + \textcolor{purple}{{\mathbf{D}}_{i}^{(j)}} \vV_j$.
      \ENDFOR
      \STATE On chip, compute $\vO_{i} = \diag(\ell_{i}^{(T_c)})^{-1} \vO_{i}^{(T_c)}$.
      \STATE On chip, compute $L_{i} = m_{i}^{(T_c)} + \log(\ell_i^{(T_c)})$.
      \STATE Write $\vO_{i}$ to HBM as the $i$-th block of $\vO$.
      \STATE Write $L_{i}$ to HBM as the $i$-th block of $L$.
    \ENDFOR
    \STATE Return the output $\vO$ and the logsumexp $L$.
  \end{algorithmic}
\end{algorithm*}
(The parts that are different from the original algorithm are marked in purple.)

\newpage
\section{Training Details}
\label{appendix:training_details}

\begin{table}[ht]
    \centering
    \small
    \begin{tabular}{l|cccc}
    \toprule
         & & & &\\
         \midrule
         Parameters&   71M& 160M&400M &1.4B\\
 Embedding Size& 512& 768& 1024&2048\\
         Hidden Size (Attention)&   512& 1536& 2048&4096\\
         Hidden Size (FFN)&   2048& 3072& 2048&8192\\
         Expanding Rate (FFN)&   4& 4& 2&4\\
         Activation Function&   SwishGeLU& SwishGeLU& SwishGeLU&SwishGeLU\\
         Normalization Type& RMSNorm& RMSNorm& RMSNorm&RMSNorm\\
         Positional Encoding&   RoPE / ALiBi& RoPE / ALiBi& RoPE / ALiBi&RoPE\\
         Tokenizer&   GPT2 Tokenizer& GPT2 Tokenizer& GPT2 Tokenizer&GPT2 Tokenizer\\
 Vocabulary Size& 50257& 50257& 50257&50257\\
         \# of Attention Heads&8& 12& 16&16\\
         \# of Layers&6& 12& 16&24\\
    \bottomrule
    \end{tabular}
    \caption{Hyperparameters for WikiText-103 with ALibi and RoPE positional encoding}
\end{table}

\begin{table}[!h]
    \centering
    \small
    \resizebox{\linewidth}{!}{%
    \begin{tabular}{l|c||l|c||l|c}
    \toprule
         \multicolumn{2}{c}{Hyperparameters for Wikitext-103} & \multicolumn{2}{c}{Hyperparameters for MiniPile} & \multicolumn{2}{c}{Hyperparameters for the Pile}\\
         \midrule
         Data&  WikiText-103 & Data&MiniPile & Data&Pile\\
         Sequence Length&  512 & Sequence Length&512 / 1024 & Sequence Length&1024\\
         Batch Size&  64 & Batch Size&128 & Batch Size&128\\
         Tokens per Batch&  32768 & Tokens per Batch&65536 / 131072 & Tokens per Batch&131072\\
         Total Steps&  50k & Steps per Epoch&22k & Total Steps&200k \\
         Warmup Steps&4k & Total Epoch&2 & Warmup Steps &4k\\
         Beginning Learning Rate&  1e-6 & Warmup Steps &4k & Beginning Learning Rate&5e-6\\
         Peak Learning Rate&  6e-4 & Beginning Learning Rate&1e-6 & Peak Learning Rate&2e-4\\
         Learning Rate Decay&Linear & Peak Learning Rate&4e-4 & Learning Rate Decay &Cosine\\
         Optimizer&AdamW & Learning Rate Decay &Linear & Optimizer&AdamW\\
         Adam $\epsilon$ &$1 \times 10^{-8}$ & Optimizer&AdamW & Adam $\epsilon$ &$1 \times 10^{-8}$\\
         Adam $\beta_1$&0.9 & Adam $\epsilon$ &$1 \times 10^{-8}$ & Adam $\beta_1$&0.9\\
         Adam $\beta_2$&0.98 & Adam $\beta_1$&0.9 & Adam $\beta_2$&0.98\\
         Hidden Dropout&0 & Adam $\beta_2$&0.98 & Hidden Dropout&0\\
         GELU Dropout&0 & Hidden Dropout&0 & GELU Dropout&0\\
         Attention Dropout (if needed)&0 & GELU Dropout&0 & Attention Dropout (if needed)&0\\
         Weight Decay&0.01 & Attention Dropout (if needed)&0 & Weight Decay&0.1\\
         Gradient Clipping Value&1 & Weight Decay&0.1 & Gradient Clipping Value&1\\
         Head-wise $\gamma$&True & Gradient Clipping Value&1 & Head-wise $\gamma$&True\\
         $\gamma$ Value&0.5 & Head-wise $\gamma$&True & $\gamma$ Value&0.5\\
    \bottomrule
    \end{tabular}
    }
    \caption{Hyperparameters for WikiText-103 with ALibi and RoPE positional encoding}
\end{table}
\section{{Visualization of Attention Heads with StableMask}}
\label{appendix:visualization}
\newpage
\includepdf[pages={1, 3, 4, 5, 6, 7, 8}]{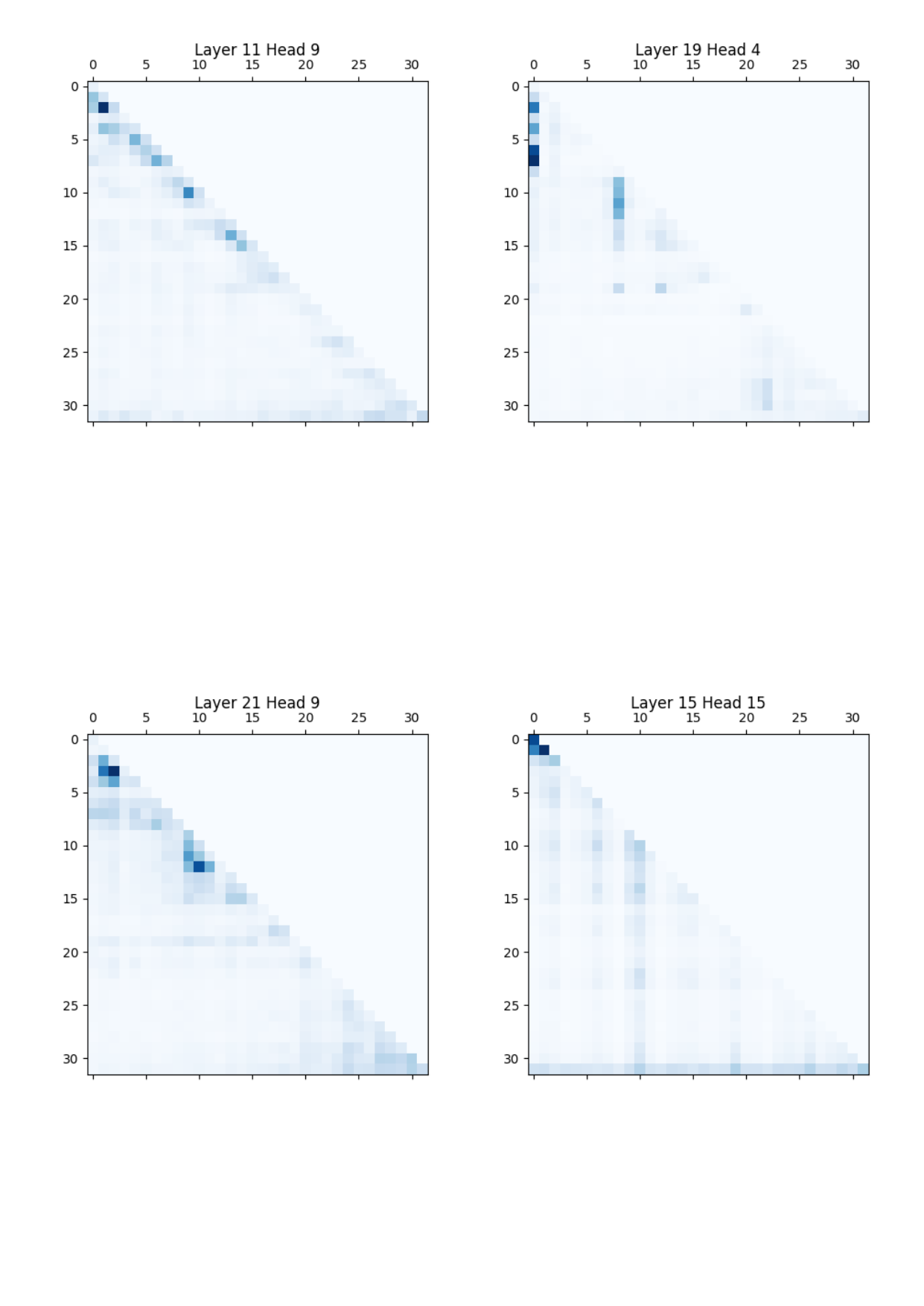}

\end{document}